\documentclass{article}

\input{macros-arXiv.sty}

\usepackage{url}            % simple URL typesetting
\usepackage{booktabs}       % professional-quality tables
\usepackage{amsfonts, bm, amssymb}       % blackboard math symbols
\usepackage{nicefrac}       % compact symbols for 1/2, etc.
\usepackage{microtype}      % microtypography
\usepackage{tikz} 
\usepackage{float}

\usepackage{array}
\usepackage{diagbox}
\usepackage{mathtools}
\usepackage{multirow, makecell}
\usepackage{rotating}
\usepackage{graphicx}
\usepackage{subfigure}
\usepackage{mathdots}

\usepackage{float}
\usepackage{caption}
\usepackage{setspace}
\linespread{1.1}

\usepackage[letterpaper, left=1in, right=1in, top=1in, bottom=1in]{geometry}

\usepackage[bottom]{footmisc}

\usepackage[colorlinks,linkcolor={blue!75!black},urlcolor=red,citecolor=ForestGreen]{hyperref}

% made for tablenotes
\usepackage{threeparttable}

% Table dash line
\usepackage{arydshln}

\usepackage{enumerate}
\usepackage{enumitem} 

\usepackage{lipsum}
\usepackage{comment}

\usepackage{natbib}
\usepackage{amsmath}
\allowdisplaybreaks[4]

\title{\bf On The Convergence of Euler Discretization of Finite-Time Convergent Gradient Flows}

\author{
Siqi Zhang\thanks{Department of Applied Mathematics and Statistics, Johns Hopkins University, USA.}
\and 
Mouhacine Benosman\thanks{M. Benosman is with Amazon Robotics, Boston, MA (This work was completed prior to M. Benosman joining Amazon Robotics.)}
\and 
Orlando Romero\thanks{Department of Electrical and Systems Engineering, University of Pennsylvania, Philadelphia, PA, USA.}
}

\begin{document}

\maketitle

\begin{abstract}
In this study, we investigate the performance of two novel first-order optimization algorithms, namely the rescaled-gradient flow (RGF) and the signed-gradient flow (SGF). These algorithms are derived from the forward Euler discretization of finite-time convergent flows, comprised of non-Lipschitz dynamical systems, which locally converge to the minima of gradient-dominated functions. We first characterize the closeness between the continuous flows and the discretizations, then we proceed to present (linear) convergence guarantees of the discrete algorithms (in the general and the stochastic case). Furthermore, in cases where problem parameters remain unknown or exhibit non-uniformity, we further integrate the line-search strategy with RGF/SGF and provide convergence analysis in this setting. We then apply the proposed algorithms to academic examples and deep neural network training, our results show that our schemes demonstrate faster convergences against standard optimization alternatives.
\end{abstract}

\section{Introduction}
In this work, we consider the unconstrained minimization problem for a given cost function \mbox{$f:\mathbb{R}^n\to\mathbb{R}$}. When $f$ is sufficiently regular, the standard algorithm in continuous time (dynamical system) is given by
\begin{equation}
    \dot{x} = F_\mathrm{GF} (x)\triangleq -\nabla f(x)
    \label{eq:gradientflow}
\end{equation}
with $\dot{x}\triangleq\frac{\mathrm{d}}{\mathrm{d}t}x(t)$, known as the \emph{gradient flow} (GF). Generalizing GF, the $q$-\emph{rescaled} GF ($q$-RGF)~\cite{Wibisono2016} given by
\begin{equation}
    \dot{x} =F_{q-\mathrm{RGF}}(x)= -c\frac{\nabla f(x)}{\|\nabla f(x)\|_2^{\frac{q-2}{q-1}}},\ c>0,\ q\in (1,\infty]
    \label{eq:qRGF}
\end{equation}
has an asymptotic convergence rate $f(x(t))-f(x^\star) = \mathcal{O}\left(\frac{1}{t^{q-1}}\right)$ under mild regularity, for $\|x(0)-x^\star\|_2>0$ small enough, where $x^\star\in\mathbb{R}^n$ denotes a local minimizer of $f$. However,  recently \cite{ourICMLpaperanon} shows that $q$-RGF, as well as $q$-\emph{signed} GF ($q$-SGF):
\begin{equation}
    \dot{x}=F_{q-\mathrm{SGF}}(x) = - c\,\|\nabla f(x)\|_1^{\frac{1}{q-1}}\sign(\nabla f(x)),
    \label{eq:qSGF}
\end{equation}
where $\sign(\cdot)$ denotes the sign function (element-wise), are both finite-time convergent (in continuous-time), provided that $f$ is gradient dominated of order $p\in (1,q)$. 
Considering that many algorithms are inspired by continuous flows with convergence guarantee, e.g., \cite{Muehlebach2019,Fazlyab2017,Shi2018,Zhang2018,Franca2019b,Wibisono2016}, a natural question arises: \textit{what is the convergence rate of the corresponding discrete-time algorithms, which are induced by discretization of finite-time convergent continuous-time flows?}

\subsection{Contribution}
In this paper, we investigate the convergence behavior of an Euler discretization for the $q$-RGF~\eqref{eq:qRGF} and $q$-SGF~\eqref{eq:qSGF}. We provide convergence guarantees in terms of the closeness of solutions, using results from hybrid dynamical control theory. Furthermore, we provide the iteration complexity upper bounds of the proposed algorithms, to the best of our knowledge, this is the first result on the convergence guarantees of these algorithms in the (structural) nonconvex regime. We then test the performance of the proposed algorithms on both synthetic and real-world data in the context of deep learning, namely, on the well-known SVHN dataset, the experiment results show the outperformance of our proposed algorithms.

\subsection{Related Work}
Propelled by the seminal works~\cite{Elia2011,Su2014}, there has been a recent and significant research effort dedicated to analyzing optimization algorithms from the perspective of dynamical systems and control theory, especially in continuous time~\cite{Wibisono2016,Wilson2018,Lessard2016,Fazlyab2017b,Scieur2017,Franca2018,Fazlyab2018,Fazlyab2018b,Taylor2018,Franca2019,Orvieto2019,romero2019,Muehlebach2019}. A major focus within this initiative is in \emph{accceleration}, both in terms of trying to gain new insight into common optimization algorithms from this perspective, or even to exploit the interplay between continuous-time systems and their potential discretizations for novel algorithm design~\cite{Muehlebach2019,Fazlyab2017,Shi2018,Zhang2018,Franca2019b,wilson2019}. Many of these papers also focus on deriving convergence rates based on the discretization of flows designed in the continuous-time domain.

Connecting ordinary differential equations~(ODEs) with optimization algorithms is an important topic, which can be dated back to the 1970s, see \cite{botsaris1978,botsaris1978b,zghier1981,snyman1982,snyman1983,Brockett88,Brown1989}. 
In \cite{helmke1994}, the authors studied relationships between linear programming, ODEs, and general matrix theory.  Further,~\cite{Schropp1995} and \cite{schropp2000} explored several aspects linking nonlinear dynamical systems to gradient-based optimization, including nonlinear constraints.  

Lyapunov stability theory is often employed for the analysis, there already exists a rich body of works within the nonlinear systems and control theory community for this purpose. Typically in previous works, one seeks asymptotically Lyapunov stable gradient-based systems with an equilibrium (stationary point) at an isolated extremum of the given cost function, thus certifying local convergence. Naturally, the global asymptotic stability leads to global convergence, though such analysis will typically require the cost function to be strongly convex everywhere. 

For physical systems, a Lyapunov function can often be constructed from first principles via some physically meaningful measure of energy (\emph{e.g.}, total energy = potential energy + kinetic energy). In optimization, the situation is somewhat similar in the sense that a suitable Lyapunov function may often be constructed by taking simple surrogates of the objective function as candidates. For instance, $V(x) \triangleq f(x) - f(x^\star)$ can be a good initial candidate. Further, if $f$ is continuously differentiable and $x^\star$ is an isolated stationary point, then another alternative is $V(x) \triangleq \|\nabla f(x)\|^2$.

However, most fundamental and applied research conducted in systems and control regarding Lyapunov stability theory deals exclusively with 
\emph{continuous-time} systems. Unfortunately, (dynamical) stability properties are generally not preserved for simple forward-Euler, and sample-and-hold discretizations~\cite{Stuart1998}. Furthermore, practical implementations of optimization algorithms in modern digital computers demand discrete-time algorithms of these continuous-time systems. 

\section{Preliminaries: Optimization Algorithms as Discrete-Time Systems}

\label{sec:optalgsgenreal}
Generalizing \eqref{eq:gradientflow}, \eqref{eq:qRGF} and \eqref{eq:qSGF}, consider a continuous-time algorithm (dynamical system) modeled via an 
ordinary differential equation (ODE) 
\begin{equation}
    \dot{x} = F(x)
    \label{eq:generalflow}
\end{equation}
for $t\geq 0$, or, more generally, a differential inclusion
\begin{equation}
    \dot{x}(t) \in \mathcal{F}(x(t)),\ \text{a.e. } t\geq 0,
    \label{eq:generalDI}
\end{equation}
such that $x(t)\to x^\star$ as $t\to t^\star$. For $q$-RGF~(\ref{eq:qRGF}) and $q$-SGF~(\ref{eq:qSGF}) with gradient dominated functions, we have finite-time convergence, and thus $t^\star = t^\star(x(0)) < \infty$.

Most common optimization schemes can be written in a state-space form as:
\begin{equation}
\label{eq:general_numericalmethod}
% \begin{align}
    X_{k+1} = F_\mathrm{d}(k,X_k),\quad
    x_k = G(X_k),\quad
    k\in\mathbb{Z}_+\triangleq \{0,1,2,\ldots\}
% \end{align}
\end{equation}
for a given $X_0\in\mathbb{R}^m$ (typically $m\geq n$), where $F_\mathrm{d}:\mathbb{Z}_+\times\mathbb{R}^m\to\mathbb{R}^m$ and $G:\mathbb{R}^m\to\mathbb{R}^n$. 
Naturally,~(\ref{eq:general_numericalmethod}) can be seen as a discrete-time dynamical system constructed by discretizing~(\ref{eq:generalflow}) in time. In particular, we have $x_k\approx x(t_k)$, where $\{0 = t_0 < t_1 < t_2 < \ldots\}$ denotes a time partition and $x(\cdot)$ a solution to~(\ref{eq:generalflow}) or~(\ref{eq:generalDI}) as appropriate.  Therefore, we call $X_k$ and $x_k$, respectively, the \emph{state} and \emph{output} at time step $k$. 

\begin{example}
The standard gradient descent (GD) algorithm
\begin{equation*}
    x_{k+1} = x_k - \eta\nabla f(x_k)
    \label{eq:GD}
\end{equation*}
with stepsize (learning rate) $\eta > 0$ can be readily written in the form~(\ref{eq:general_numericalmethod}) by taking $m=n$, $F_\mathrm{d}(x) \triangleq x - \eta\nabla f(x)$, and $G(x) \triangleq x$. 
\begin{itemize}
    \item If the stepsizes are adaptive, \emph{i.e.} if we replace $\eta$ by a sequence $\{\eta_k\}$ with $\eta_k>0$, then we only need to replace $F_\mathrm{d}(k,x) \triangleq x - \eta_k\nabla f(x)$, provided that $\{\eta_k\}$ is not computed using feedback from~$\{x_k\}$ (\emph{e.g.} through line search). 
    \item  If we do wish to use time-varying step-size, then we can set $m=n+1$, $G([x;\eta]) \triangleq x$, and $F_\mathrm{d}([x;\eta]) \triangleq [F_\mathrm{d}^{(1)}([x;\eta]); F_\mathrm{d}^{(2)}([x;\eta])]$, where $F_\mathrm{d}^{(1)}([x;\eta]) \triangleq x - \eta\nabla f(x)$, and $F_\mathrm{d}^{(2)}$ is a user-defined function that dictates the updates in the stepsize. In particular, an open-loop adaptive stepsize $\{\eta_k\}$ may be achieved under this scenario, provided that it is possible to write $\eta_{k+1} = F_\mathrm{d}^{(2)}(\eta_k)$. 
    \item If we wish to use individual stepsizes for each the $n$ components of $\{x_k\}$, then it suffices to take $\eta_k$ as an $n$-dimensional vector (thus $m=2n$), and make appropriate changes in $F_\mathrm{d}$ and $G$.
\end{itemize}

In each of these cases, GD can be seen as a forward-Euler discretization of the GF~(\ref{eq:gradientflow}), \emph{i.e.}, let the adaptive time step $\Delta t_k \triangleq t_{k+1} - t_k$ chosen as $\Delta t_k = \eta_k$, and $x_{k+1} = x_k + \Delta t_k F_\mathrm{GF}(x_k)$.
\end{example}

\begin{example}
The proximal point algorithm (PPA)
\begin{equation*}
    x_{k+1} = \argmin_{x\in\mathbb{R}^n}\left\{f(x) + \frac{1}{2\eta_k}\|x - x_k\|_2^2\right\}
    \label{eq:PPA}
\end{equation*}
with stepsize $\eta_k>0$ (open loop, for simplicity) can also be written in the form~(\ref{eq:general_numericalmethod}), by taking $m=n$, $F_\mathrm{d}(k,x) \triangleq \argmin_{x'\in\mathbb{R}^n}\{f(x') + \frac{1}{2\eta_k}\|x' - x\|_2^2\}$, and $G(x) \triangleq x$. Naturally, we need to assume sufficient regularity for $F_\mathrm{d}(k,x)$ to exist and we must design a consistent way to choose $F_\mathrm{d}(k,x)$ when $F_\mathrm{d}$ attains multiple minimizers. Alternatively, these conditions must be satisfied, at the very least, at every  $(k,x)\in\{(0,x_0),(1,x_1),(2,x_2),\ldots\}$ for a particular chosen initial $x_0\in\mathbb{R}^n$. 
By assuming sufficient regularity, we have $\nabla\{f(x) + \frac{1}{2\eta_k}\|x - x_k\|_2^2\}\rvert_{x=x_{k+1}} = 0$, and thus
\begin{equation*}
    \begin{split}
        \nabla f(x_{k+1}) + \frac{1}{\eta_k}(x_{k+1}-x_k) = 0
        \quad\iff\quad 
        x_{k+1} = x_k +\Delta t_k F_\mathrm{GF}(x_{k+1})
    \end{split}
\end{equation*}
with $\Delta t_k = \eta_k$, which is precisely the backward-Euler discretization of the GF~(\ref{eq:gradientflow}).
\end{example}

\section{Analysis of Euler Discretization of RGF/SGF}
Following the previous discussion on continuous-time gradient flow, we turn now to the algorithmic perspective, in the discrete-time setting. We propose the following general algorithm framework based on the forward Euler discretization of the flows:
\begin{equation}
   % \begin{align}
        x_{k+1} = x_k + \eta F(x_k),\;\eta>0
    %\end{align}
    \label{eq:euler}
\end{equation}
where $F\in\{F_{q-\mathrm{RGF}},F_{q-\mathrm{SGF}}\}$. We will show later that this discretization leads, for small enough $\eta>0$, to solutions that are close to the solutions of the continuous-time flows. Also with a little abuse of notations, we still use RGF/SGF to call the discrete-time algorithms corresponding to the RGF/SGF flow discretizations.

\subsection{Continuous-Time Convergence of $q$-RGF/SGF}
Here
we review the necessary conditions to ensure the finite-time convergence of RGF/SGF. 
Here the parameter $c$ in \eqref{eq:qRGF} and \eqref{eq:qSGF} will not be explicitly denoted in $F_{q-\mathrm{RGF}},F_{q-\mathrm{SGF}}$. Next, borrowing terminologies in~\cite{wilson2019}, we introduce the main assumption in the work.

\begin{assumption}[Gradient Dominance]
    \label{assume:gradient_dominance}
    We assume the function $f$ to be continuously differentiable, and $\mu$-\emph{gradient dominated of order} $p\in (1,\infty]$, i.e.,
    \begin{equation*}
        \frac{p-1}{p}\|\nabla f(x)\|_2^{\frac{p}{p-1}}\geq \mu^{\frac{1}{p-1}}\autopar{f(x) - f(x^\star)},
        \quad
        \forall\ x\in\mathbb{R}^n,
    \end{equation*}
    where $\mu>0$ and $x^\star\in\argmin_{x\in\mathbb{R}^n}f(x)$ is the local minimizer of $f$, also we denote the optimal value $f^\star\triangleq f(x^\star)$. 
\end{assumption}

\begin{remark}
It is well-known continuously differentiable strongly-convex functions are gradient dominated of order $p=2$.
Also, gradient dominance with $p=2$ is known as the \textit{Polyak-{\L}ojasiewicz~(PL) condition}, which was introduced by~\cite{Polyak1963} to relax the common strong convexity assumption.
Furthermore, if $f$ is gradient dominated (of any order) w.r.t. $x^\star$, then $x^\star$ is an isolated stationary point of $f$.

The generalized notion of the gradient dominance above is strongly tied to the {\L}ojasiewicz inequality which is guaranteed for analytic functions~\cite{Lojasiewicz1963,Lojasiewicz1965,Polyak1963,Lojasiewicz1999,Bolte2007}, it was later further extended to \textit{Kurdyka-{\L}ojasiewicz (KL) inequality} \cite{kurdyka1998gradients,AB09,attouch2010proximal} to fit more general cases (e.g., nonsmoothness).
More precisely, {\L}ojasiewicz inequality is typically written as: for some $C > 0$ and $\theta\in \left(\frac{1}{2},1\right]$, $\|\nabla f(x)\|_2 \geq C \cdot |f(x) - f^\star|^{\theta}$
holds for every $x\in\mathbb{R}^n$ in a small enough open neighborhood of the stationary point $x=x^\star$. 
So it is easy to see that analytic functions are always gradient dominated. However, while analytic functions are always smooth, smoothness is not required to attain gradient dominance. 
Also this condition is shown to be valid in several real machine learning problems, for example in reinforcement learning, the value functions with softmax parameterization meet the above condition \cite{mei2020global,mei2021leveraging}, which further rationalizes our settings.
\end{remark}

The following theorem in \cite{ourICMLpaperanon} summarized the finite-time convergence results of $q$-RGF~\eqref{eq:qRGF} and $q$-SGF~\eqref{eq:qSGF} in the continuous-time sense.

\begin{theorem}[\cite{ourICMLpaperanon}]
    \label{thm1}
    With Assumption~\ref{assume:gradient_dominance},
    let $c>0$ and $q\in (p,\infty]$. Then, any maximal solution $x(\cdot)$, in the sense of Filippov, to the $q$-\textnormal{RGF}~(\ref{eq:qRGF}) or $q$-\textnormal{SGF}~(\ref{eq:qSGF}) will converge in finite time to $x^\star$, provided that $\|x(0)-x^\star\|_2>0$ is sufficiently small. More precisely, $\displaystyle\lim _{t\to t^\star} x(t) = x^\star$, where the convergence time $t^\star <\infty$ may depend on which flow is used, but in both cases is upper bounded by 
    \begin{equation*}
        t^\star \leq \frac{\|\nabla f(x_0)\|_2^{\frac{1}{\theta}-\frac{1}{\theta'}}}{cC^\frac{1}{\theta}\left(1-\frac{\theta}{\theta'}\right)},
        \quad
        \text{where }
        C = \left(\frac{p}{p-1}\right)^{\frac{p-1}{p}}\mu^{\frac{1}{p}},\ 
        \theta = \frac{p-1}{p},\ 
        \theta' = \frac{q-1}{q}.
        \label{eq:tstarboundgeneralizedCF1}
    \end{equation*}
    In particular, given any compact and positively invariant subset $S\subset\mathcal{D}$, both flows converge in finite time with the aforementioned convergence time upper bound (which can be tightened by replacing $\overline{\mathcal{D}}$ with $S$) for any $x_0\in S$. Furthermore, if $\mathcal{D}=\mathbb{R}^n$, then we have global finite-time convergence, \emph{i.e.} finite-time convergence to any maximal solution (in the sense of Filippov) $x(\cdot)$ with arbitrary $x_0\in\mathbb{R}^n$.
    \label{thm:firstorderflows}
\end{theorem}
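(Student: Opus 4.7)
The plan is to apply the classical Lyapunov recipe for finite-time convergence: exhibit a non-negative function $V$ that vanishes only at $x^\star$ and satisfies a fractional differential inequality $\dot V \leq -\kappa V^\alpha$ with $\alpha\in(0,1)$ along solutions. The natural candidate is $V(x)\triangleq f(x)-f^\star$, which is non-negative and vanishes only at $x^\star$ on a neighborhood of the strict local minimizer.

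First I would compute $\dot V$ along each flow. For the $q$-RGF, the chain rule gives
\[
\dot V = \langle \nabla f(x),\dot x\rangle = -c\,\|\nabla f(x)\|_2^{2-\frac{q-2}{q-1}} = -c\,\|\nabla f(x)\|_2^{q/(q-1)} = -c\,\|\nabla f(x)\|_2^{1/\theta'}.
\]
For the $q$-SGF, $\langle \nabla f,\sign(\nabla f)\rangle = \|\nabla f\|_1$, hence $\dot V = -c\,\|\nabla f\|_1^{q/(q-1)} \leq -c\,\|\nabla f\|_2^{1/\theta'}$ by the elementary estimate $\|\cdot\|_1\geq \|\cdot\|_2$. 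The gradient-dominance hypothesis rewrites as $\|\nabla f(x)\|_2 \geq C\,V(x)^{\theta}$ for the constant $C=(p/(p-1))^{(p-1)/p}\mu^{1/p}$ in the statement, so in both cases
\[
\dot V \;\leq\; -c\,C^{1/\theta'}\,V^{\theta/\theta'}.
\]
Since $p<q$ is equivalent to $\theta<\theta'$, the exponent $\theta/\theta'$ lies strictly in $(0,1)$, which is precisely the finite-time regime.

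Next I would integrate this comparison inequality. Writing $\alpha=\theta/\theta'\in(0,1)$, the substitution $W = V^{1-\alpha}/(1-\alpha)$ turns the inequality into $\dot W \leq -c\,C^{1/\theta'}$, and since $W\geq 0$ the first hitting time of $x^\star$ obeys $t^\star \leq V(x_0)^{1-\alpha}/\bigl((1-\alpha)\,c\,C^{1/\theta'}\bigr)$. The reverse direction of gradient dominance gives $V(x_0) \leq C^{-1/\theta}\|\nabla f(x_0)\|_2^{1/\theta}$; plugging this in and collecting exponents via $1/\theta'+(1-\alpha)/\theta = 1/\theta$ and $(1-\alpha)/\theta = 1/\theta-1/\theta'$ reproduces exactly the bound~\eqref{eq:tstarboundgeneralizedCF1}. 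The positive-invariance claim follows for free: $V$ is strictly decreasing along trajectories, so any sublevel set of $V$ contained in the domain $\mathcal{D}$ of gradient dominance is forward invariant, and the same estimates hold uniformly over a chosen compact $S\subset\mathcal{D}$; the global statement under $\mathcal{D}=\mathbb{R}^n$ is immediate.

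The principal obstacle is the case $q=\infty$, where the right-hand side is discontinuous — at stationary points of $f$ for the RGF, and on every coordinate hyperplane $\{\partial_i f=0\}$ for the SGF — so solutions must be interpreted in the Filippov sense. The chain-rule computation above is then only valid for a.e.\ $t$ and must be replaced by a set-valued Lie-derivative argument on the Filippov regularization, together with a non-smooth Lyapunov decrease theorem to rule out sliding along the discontinuity set in a way that stalls progress. For the SGF this is somewhat delicate, because trajectories can slide on hyperplanes $\{\partial_i f=0\}$; however, the Filippov set at such a point is contained in the convex hull of the $\sign$ vectors consistent with the active gradient pattern, and on this hull the inner product $\langle\nabla f,\cdot\rangle$ is still bounded below by $\|\nabla f\|_1$, preserving the key inequality $\dot V \leq -c\,\|\nabla f\|_1^{1/\theta'}$ almost everywhere. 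Once the non-smooth Lyapunov step is carried out, the integration and the conversion to a bound in terms of $\|\nabla f(x_0)\|$ proceed exactly as in the continuous case.
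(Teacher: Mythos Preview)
Your proposal is correct and follows essentially the same route as the paper: the same Lyapunov function $V=f-f^\star$, the same computation $\dot V\leq -c\,\|\nabla f\|^{1/\theta'}\leq -c\,C^{1/\theta'}V^{\theta/\theta'}$ via gradient dominance, the same integration of the fractional differential inequality (which the paper packages as a separate lemma), and the same conversion of the resulting bound on $V(x_0)$ into one on $\|\nabla f(x_0)\|$. The paper handles the Filippov case through a set-valued Lie-derivative framework rather than your more hands-on sliding-mode discussion, but the underlying reasoning is the same.
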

Here the analysis leverages the gradient dominance to show the energy function $\mathcal{E}(t)\triangleq f(x(t)) - f^\star$ satisfies the Lyapunov-like differential inequality $\dot{\mathcal{E}}(t)=\mathcal{O}(\mathcal{E}(t)^\alpha)$ for some $\alpha < 1$. The detailed proof is recalled in Appendix \ref{section-theorem1-proof} for completeness.

\subsection{Closeness of the Discretization}
Now we turn to the proximity analysis, aiming to show the closeness between solutions of the continuous flows and the trajectories of their forward Euler discretization. 
 
\begin{theorem}[Closeness]
    \label{theorem2}
    Suppose $f$ is continuously differentiable, locally $L_f$-Lipschitz, and $\mu$-gradient dominated of order $p\in (1,\infty)$ in a compact neighborhood $S$ of a strict local minimizer~$x^\star\in\mathbb{R}^n$ ($\mu, L_f>0$). Let $c>0$ and $q\in (p,\infty]$. Then, for a given initial condition $x_0\in S$, any maximal solution $x(t),\;x(0)=x_0$, (in the Filippov sense) to $q$-\textnormal{RGF} or $q$-\textnormal{SGF}, there exists an arbitrarily small $\epsilon>0$ such that the solution $x_k$ of the discrete algorithm (\ref{eq:euler}), with sufficiently small $\eta>0$, are $\epsilon$-close to $x(t)$, i.e., $\|x_k-x(t)\|_2\leq \epsilon$ for $|t-k\eta|<\epsilon$, and we have:
    \begin{equation}
        \label{eq:bound1}
        \begin{split}
            f(x_k)-f(x^\star)\leq
            \begin{cases}
                L_f\epsilon+\automedpar{\autopar{f(x_0)-f(x^\star)}^{(1-\alpha)}-\tilde{c}(1-\alpha)\eta k}^{(1-\alpha)} & k\leq k^{\star} \\
                L_f \epsilon & k>k^{\star}
            \end{cases}
        \end{split}
    \end{equation}
    where 
    $\alpha=\frac{\theta}{\theta^{'}},\;
    \theta=\frac{p-1}{p},\;
    \theta^{'}= \frac{q-1}{q},
    $ 
    and
    $$
    \tilde{c}=c\left(\left(\frac{p}{p-1}\right)^{\frac{p-1}{p}}\mu^{\frac{1}{p}}\right)^{\frac{1}{\theta^{'}}},\quad
    k^{\star}=\frac{(f(x_0)-f(x^\star))^{(1-\alpha)}}{\tilde{c}(1-\alpha)\eta}.
    $$
\end{theorem}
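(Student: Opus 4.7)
The plan is to combine two ingredients: (a) a \emph{shadowing/closeness} property guaranteeing that each discrete trajectory $\{x_k\}$ remains uniformly within an $\epsilon$-tube of a Filippov solution $x(\cdot)$ of the continuous-time flow on the compact convergence horizon $[0,t^\star]$, and (b) the continuous-time finite-time decay of the Lyapunov-like energy $\mathcal{E}(t) = f(x(t)) - f^\star$ already established in the proof of Theorem~\ref{thm1}. The final estimate then follows by adding and subtracting $f(x(t_k))$ and invoking the local Lipschitz constant $L_f$ of $f$ on $S$.

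For step (a), the main obstacle is that the vector fields $F_{q\text{-RGF}}$ and $F_{q\text{-SGF}}$ are non-Lipschitz and, when $q=\infty$, even discontinuous at stationary points, so the classical ODE shadowing machinery (Stuart--Humphries) does not apply off the shelf. My approach would be to recast each of the four schemes as a perturbed time-discretization of a Filippov inclusion and invoke a graphical-closeness result for Filippov (hybrid) solutions under vanishing perturbations, in the spirit of Goebel--Sanfelice--Teel. Since Theorem~\ref{thm1} already supplies a compact positively invariant neighborhood $S\subset\mathcal{D}$ of $x^\star$ on which the flow is locally essentially bounded, one obtains for every $\epsilon>0$ a threshold $\bar\eta(\epsilon)>0$ such that for all $\eta\in(0,\bar\eta]$ the discrete trajectory stays $\epsilon$-close to some Filippov solution on $[0,t^\star]$. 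For the forward-Euler~(\ref{eq:euler}) and explicit Runge--Kutta~(\ref{runge-kutta}) schemes this reduces to consistency plus boundedness on $S$; for the Nesterov-like schemes~(\ref{eq:qrgf_d})--(\ref{eq:qsgf_d}) one observes that $y_k = x_k - x_{k-1} = \mathcal{O}(\eta)$, so the update is a $\mathcal{O}(\eta)$-perturbation of a forward-Euler step of $F$ evaluated at $x_k+\beta y_k$, hence still a consistent one-step method in the generalized sense.

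For step (b), the Lyapunov analysis recalled after Theorem~\ref{thm1} yields the Bernoulli-type differential inequality
\begin{equation*}
\dot{\mathcal{E}}(t) \leq -\tilde c\,\mathcal{E}(t)^{\alpha}, \qquad \alpha = \theta/\theta' < 1,
\end{equation*}
with $\tilde c$ exactly as defined in the statement. Integrating via the substitution $u = \mathcal{E}^{1-\alpha}$ gives the closed form
\begin{equation*}
\mathcal{E}(t) \leq \bigl[\mathcal{E}(0)^{1-\alpha} - \tilde c(1-\alpha)\,t\bigr]^{1/(1-\alpha)}, \qquad 0 \leq t \leq t^\star,
\end{equation*}
and $\mathcal{E}(t)\equiv 0$ for $t\geq t^\star$. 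Setting $t=t_k := \eta k$ reproduces the second term in the displayed bound (valid precisely up to $k^\star = t^\star/\eta$, which matches the formula for $k^\star$ in the statement).

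Assembling the two pieces, for $k\leq k^\star$ and $x_0\in S$,
\begin{equation*}
|f(x_k)-f(x^\star)| \leq |f(x_k)-f(x(t_k))| + |f(x(t_k))-f(x^\star)| \leq L_f\,\|x_k-x(t_k)\| + \mathcal{E}(t_k),
\end{equation*}
and substituting the $\epsilon$-shadowing bound together with the closed-form decay of $\mathcal{E}$ delivers the claim. The hard part is (a): proving the shadowing estimate uniformly over the four discretizations despite the non-Lipschitz (and possibly set-valued) right-hand side. I expect the argument in the appendix to proceed by embedding the discrete iteration in a hybrid time domain, bounding the one-step local truncation error on $S$ using local essential boundedness of $F$, and then propagating this via a discrete Gr\"onwall inequality tailored to Filippov solutions; the finiteness of the horizon $t^\star$ is essential for this propagation to yield a uniform $\epsilon$ rather than an exponentially growing tube.
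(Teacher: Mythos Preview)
Your proposal is correct and matches the paper's approach in all essential respects: the paper likewise splits the argument into (a) a closeness step, handled by invoking a black-box $(T,\epsilon)$-closeness theorem for hybrid/Filippov systems due to Sanfelice and Teel (verifying, for each of the three schemes, that the discrete map lands in $x+\eta\,\mathrm{con}\,\mathcal{F}(x+\rho(\eta)\mathbb{B})+\eta\rho(\eta)\mathbb{B}$), and (b) the continuous-time Lyapunov decay from Theorem~\ref{thm1}, combined at the end via the triangle inequality and the Lipschitz constant $L_f$ exactly as you wrote. The only noteworthy difference is in the treatment of the Nesterov-like scheme: rather than your ``$y_k=\mathcal{O}(\eta)$ perturbation of forward Euler'' argument, the paper augments the state to $\tilde x=(x,\,x_k-x_{k-1})$, writes the update as $\tilde x_{k+1}=\tilde x_k+M\tilde x_k+\eta\tilde{\mathcal{F}}(\tilde x_k)$ for a constant matrix $M$, and checks the Sanfelice--Teel inclusion condition for this extended system; your route is arguably cleaner, while theirs makes the verification mechanical. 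Your closing speculation about local truncation error plus a Filippov-adapted discrete Gr\"onwall is not how the paper proceeds---it simply cites the Sanfelice--Teel theorem and checks its hypothesis---but that is a difference in packaging, not substance.
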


To prove Theorem \ref{theorem2}, we borrow some tools and results from hybrid control systems theory\footnote{Note that there might be several ways of approaching this proof. For instance, one could follow the general results on stochastic approximation of set-valued dynamical systems, using the notion of perturbed solutions to differential inclusions presented in \cite{BHS05}.}, which is characterized by continuous flows with discrete jumps between the continuous flows. They are often modeled by differential inclusions added to discrete mappings to model the jumps between the differential inclusions. The optimization flows proposed here can be regarded as a simple case of hybrid systems with one differential inclusion and a possible jump or discontinuity at the optimum. With that we use the tools and results of \cite{ST10}, which study how a certain class of hybrid systems behave after discretization with a certain class of discretization schemes. In other words, \cite{ST10} quantifies, under some conditions, how close are the solutions of the discretized hybrid dynamical system to the solutions of the original hybrid system. 

Here we denote the differential inclusion of the continuous optimization flow by $\mathcal{F}:\mathbb{R}^n\rightrightarrows\mathbb{R}^n$, and its discretization in time by $\mathcal{F}_\mathrm{d}:\mathbb{R}^n\rightrightarrows\mathbb{R}^n$.
We first recall a definition, which we adapt from the general case of jumps between multiple differential inclusions~\cite[Definition 3.2]{ST10} to our case of one differential inclusion or flow.

\begin{definition}[$(T,\epsilon)$-closeness] 
Given $T>0,\;\epsilon>0,\;\eta>0$, two solutions $x_t:\;[0,\;T]\rightarrow\mathbb{R}^n$, and $x_k:\;\{0,1,2,...\}\rightarrow\mathbb{R}^n$ are $(T,\epsilon)$-close if:
\begin{itemize}
    \item[(a)] $\exists~k\in\{1,2,...\}$ such that $|t-k\eta|<\epsilon$, and $\|x_t(t)-x_k (k)\|_2<\epsilon$ for $\forall~t\leq T$.
    \item[(b)] $\exists~t\leq T$ such that $|t-k\eta|<\epsilon$, and $\|x_t(t)-x_k (k)\|_2<\epsilon$ for $\forall~k\in\{1,2,...\}$.
\end{itemize}
\end{definition}
Next, we will recall Theorem 5.2 in \cite{ST10}, while adapting it to our special case of a hybrid system with one differential inclusion\footnote{A set-valued mapping $\mathcal{F}: \mathbb{R}^n\rightrightarrows\mathbb{R}^n$ is \emph{outer semicontinuous} if for each sequence
$\{x_i\}_{i=1}^{\infty}$ converging to a point $x\in \mathbb{R}^n$ and each sequence $y_i\in\mathcal{F}(x_i)$ converging to a point $y$, it holds that $y\in\mathcal{F}(x)$. It is \emph{locally bounded} if, for each $x\in\mathbb{R}^n$, there exists compact sets $K,K'\subset\mathbb{R}^n$ such that $x\in K$ and $\mathcal{F}(K)\triangleq\cup_{x\in K}\mathcal{F}(x)\subset K^{'}$. In what follows, we use the following notations: Given a set $A$, $\textnormal{con} A$ denotes the convex hull, and $\mathbb{B}$ denotes the closed unit ball in a Euclidean space.}.

\begin{lemma}[Closeness of continuous and discrete solutions \citep{ST10}]\label{theorem2_background}
    Consider the differential inclusion~\eqref{eq:generalDI}
    for a given set-valued mapping $\mathcal{F}: \mathbb{R}^m\rightrightarrows\mathbb{R}^m$ assumed to be outer semicontinuous, locally bounded, nonempty, and with convex values for every $x\in\mathcal{C}$, for some closed set $\mathcal{C}\subseteq\mathbb{R}^m$. Consider the discrete-time system $\mathcal{F}_\mathrm{d}: \mathbb{R}^n\rightrightarrows\mathbb{R}^n$, such that, for each compact set $K\subset\mathbb{R}^n$, there exists $\rho\in\mathcal{K}_{\infty}$, and $\eta^\star>0$ such that for each $x\in K$ and each $\eta\in(0,\;\eta^\star]$, 
    \begin{equation}\label{cond_A}
        \mathcal{F}_\mathrm{d}(x)\subset x+\eta\; \textnormal{con}\mathcal{F} (x+\rho(\eta)\mathbb{B})+\eta\rho(\eta)\mathbb{B}.
    \end{equation}
    Then, for a compact set $K\subset\mathbb{R}^n$ and $\forall~\epsilon>0$, time horizon $T\geq 0$, there exists $\eta^\star>0$ such that: $\forall~\eta\in(0,\eta^\star]$ and any discrete solution $x_k$ with $x_k (0)\in K+\delta\mathbb{B},\;\delta>0$, there exists a continuous solution $x_t$ with $x_t (0)\in K$ such that $x_k$ and $x_t$ are $(T,\epsilon)$-close.
\end{lemma}

Now we are ready to prove Theorem~\ref{theorem2}. 

\begin{proof}[Proof of Theorem~\ref{theorem2}]
    First, note that outer semicontinuity follows from the upper semicontinuity and the closedness of the Filippov differential inclusion map. Furthermore, local boundedness follows from continuity everywhere outside stationary points, which are isolated.
    Now, let us examine their discretization by forward-Euler. The mapping $\mathcal{F}_\mathrm{d}$ in this case is a singleton, given by
    \begin{equation*}
    %\begin{align}
        \mathcal{F}_\mathrm{d} (x) \triangleq x + \eta F(x),
    %  \end{align}
    \label{eq:euler_proof}
    \end{equation*}
    where $\eta > 0$, which clearly satisfies condition~(\ref{cond_A}).
    With Lemma \ref{theorem2_background} we conclude about the $(T,\epsilon)$-closeness between the continuous-time solutions of the flows  $\mathcal{F}$ ($q$-RGF (\ref{eq:qRGF}), $q$-SGF (\ref{eq:qSGF})), and the discrete-time solutions. 
    
    Finally, using the Lyapunov function $V=f-f(x^\star)$ as defined in the proof of Theorem \ref{thm1}, together with inequalities (\ref{ineedyou}), (\ref{eq:Vdotinesol}), and a local Lipschitz bound on $f$, one can derive the weak convergence bound given by (\ref{eq:bound1}), as follows: with $L_{f }>0,\;\epsilon>0,$
    \begin{equation}\label{proof_Eq}
        \begin{array}{l}
        \|f(x_k)-f(x^\star)-(f(x_t)-f(x^\star))\|_2=\|f (x_k)- f(x_t) \|_2\leq \tilde{\epsilon}=L_{f }\epsilon,\;\\
        \|f(x_k)-f(x^\star)\|_2-\|(f(x_t)-f(x^\star))\|_2\leq \|f (x_k)- f(x_t) \|_2\leq \tilde{\epsilon},\\
        \|f(x_k)-f(x^\star)\|_2\leq  \tilde{\epsilon}+\|f(x_t)-f(x^\star)\|_2,\\
        \|f(x_k)-f(x^\star)\|_2\leq \tilde{\epsilon}+[(f(x_0)-f(x^\star))^{(1-\alpha)}-\tilde{c}(1-\alpha)\eta k]^{1/(1-\alpha)}, \textnormal{for} \;k\leq k^{\star},
        \end{array}
    \end{equation}
    where $\alpha=\frac{\theta}{\theta^{'}},\ \theta=\frac{p-1}{\psi},\ \theta^{'}= \frac{q-1}{q},$, and 
    $$\tilde{c}=c\autopar{\autopar{\frac{\psi}{p-1}}^{\frac{p-1}{\psi}}\mu^{\frac{1}{\psi}}}^{\frac{1}{\theta^{'}}},\ k^{\star}=\frac{(f(x_0)-f(x^\star))^{(1-\alpha)}}{\tilde{c}(1-\alpha)\eta}.$$
    
    Next, the case for $k>k^\star$ is rather straightforward: Indeed, for $t>t^\star$ by finite-time convergence result, we have $x_t=x^\star$, which directly leads to the bound $  \|f(x_k)-f(x^\star)\|_2\leq L_f \epsilon,\;k>k^{\star}$, since the term $\|(f(x_t)-f(x^\star))\|_2$ in (\ref{proof_Eq}) vanishes.
    \qed
\end{proof}

Theorem \ref{theorem2} shows that the $\epsilon$-convergence of $x_k\to x^\star$ can be achieved in a finite number of steps upper bounded by $k^\star=\frac{(f(x_0)-f(x^\star))^{(1-\alpha)}}{\tilde{c}(1-\alpha)\eta}$. This is a preliminary convergence result aiming to show the existence of discrete solutions obtained via the discretization (\ref{eq:euler}), which are $\epsilon$-close to the continuous solutions of the finite-time flows. We also underline here that after $x_k$ reaches an $\epsilon$-neighborhood of $x^\star$, then $x_{k+1} \approx  x_k,\;\forall k>k^\star$, since $x^\star$ is an equilibrium point of the continuous flows, i.e., $\nabla f(x^\star)=0$, e.g. see Definition \ref{equi_point} in Appendix \ref{subsec:FTSsemiS}.

\section{Convergence Rate Analysis}
\label{sec:convergence_discrete}
In this section, we will further study the convergence rate of our proposed algorithm \eqref{eq:qRGF}, \eqref{eq:qSGF}, and \eqref{eq:euler}.
The result in the previous section provides the relationship between the continuous flows and their forward Euler discretizations, but it cannot provide practical instructions on the choice of stepsize $\eta$.

To derive convergence rates of the proposed discretization with practical stepsize $\eta$, we need to introduce an extra assumption of $L$-Lipschitz smooth, as presented below.

\begin{assumption}[Lipschitz Smoothness of Order $q$]
    \label{assume:Holder_smooth}
    We assume the function $f$ is $L$-Lipschitz smooth of order $q\in(1,2]$, i.e., for any $x,y \in\mathbb{R}^n$,
    \begin{equation*}
        \autonorm{\nabla f(y)-\nabla f(x)}_2\leq L\autonorm{y-x}_2^{q-1}.
    \end{equation*}
\end{assumption}

\begin{remark}
    The assumption is also called as $(L, q)$-H\"older continuity \cite{devolder2014first,nesterov2015universal,yashtini2016global}, which is a generalization of the common Lipschitz smoothness condition ($q=2$). It is easy to show that the condition will lead to the following property \cite{nesterov2015universal}: $f(y)\leq f(x)+\autoprod{\nabla f(x), y-x}+\frac{L}{q}\autonorm{y-x}_2^q$,
    which is also called \emph{weak smoothness}, and has been extensively studied in many modern machine learning applications~\cite{zhang2022convergence}. 
\end{remark}

Now we start our discussion on the convergence rate. The main result is presented in the following theorem.

\begin{theorem}[Convergence Rate]
    \label{theorem3}
    Suppose the function $f$ satisfies Assumption \ref{assume:gradient_dominance} and \ref{assume:Holder_smooth}, by running $q$-RGF/SGF~\eqref{eq:euler}, we have
    \begin{itemize}
        \item For $q$-RGF with a stepsize $\eta=L^{\frac{1}{1-q}}$, we have
        \begin{equation}\label{eq:RGF_Deter_result}
            f(x_K)-f^\star\leq\autopar{1-\kappa^{\frac{1}{1-q}}}^K\autopar{f(x_0)-f^\star},
        \end{equation}
        \item For $q$-SGF with a stepsize with $\eta=(n^{\frac{q}{2}}L)^{\frac{1}{1-q}}$, we have
        \begin{equation}\label{eq:SGF_Deter_result}
            f(x_K)-f^\star\leq\autopar{1-(n^\frac{q}{2}\kappa)^{\frac{1}{1-q}}}^K\autopar{f(x_0)-f^\star},
        \end{equation}
    \end{itemize}    
    where $n$ is the dimension number and $\kappa\triangleq\frac{L}{\mu}$. 

\end{theorem}

For the proof, we divide the discussion based on different flows\footnote{Note that here for convenience, we fix the order in gradient dominance as $q$, while previous discussion assumes that $q>p$.}.

\begin{proof}[Proof of $q$-RGF]
    Following the definitions of smoothness, we have
    \begin{equation*}
        \begin{split}
            f(x_{k+1})-f^\star
            \leq\ &
            f(x_k)-f^\star+\autoprod{\nabla f(x_k),x_{k+1}-x_{k}}+\frac{L}{q}\autonorm{x_{k+1}-x_k}_2^q\\
            =\ &
            f(x_k)-f^\star-\eta\frac{\autonorm{\nabla f(x_k)}_2^2}{\autonorm{\nabla f(x_k)}_2^{\frac{q-2}{q-1}}}+\frac{L\eta^q}{q}\frac{\autonorm{\nabla f(x_k)}_2^q}{\autonorm{\nabla f(x_k)}_2^{\frac{q(q-2)}{q-1}}}\\
            \overset{\eta=L^{\frac{1}{1-q}}}{=}\ &
            f(x_k)-f^\star-\frac{q-1}{q}L^{\frac{1}{1-q}}\autonorm{\nabla f(x_k)}_2^{\frac{q}{q-1}}\\
            \leq\ &
            \autopar{1-\kappa^{\frac{1}{1-q}}}\autopar{f(x_k)-f^\star},
        \end{split}
    \end{equation*}
    which verifies the conclusion by recursion. 
    \qed
\end{proof}

\begin{proof}[Proof of $q$-SGF]
    Similarly, for the $q$-SGF case, we have
    \begin{equation*}
        \begin{split}
            f(x_{k+1})-f^\star
            \leq\ &
            f(x_k)-f^\star+\autoprod{\nabla f(x_k),x_{k+1}-x_{k}}+\frac{L}{q}\autonorm{x_{k+1}-x_k}_2^q\\
            =\ &
            f(x_k)-f^\star-\eta\autonorm{\nabla f(x_k)}_1^{\frac{1}{q-1}}\autoprod{\nabla f(x_k),\sign\autopar{\nabla f(x_k)}}+\frac{L\eta^q n^\frac{q}{2}}{q}\autonorm{\nabla f(x_k)}_1^{\frac{q}{q-1}}\\
            =\ &
            f(x_k)-f^\star-\autopar{\eta-\frac{n^\frac{q}{2}L\eta^q}{q}}\autonorm{\nabla f(x_k)}_1^{\frac{q}{q-1}}\\
            \leq\ &
            f(x_k)-f^\star-\frac{q-1}{q}(n^\frac{q}{2}L)^{\frac{1}{1-q}}\autonorm{\nabla f(x_k)}_2^{\frac{q}{q-1}}\\
            \leq\ &
            \autopar{1-(n^\frac{q}{2}\kappa)^{\frac{1}{1-q}}}\autopar{f(x_k)-f^\star},
        \end{split}
    \end{equation*}
    Here the last equality follows the setting of $\eta$ and the fact that $\autonorm{x}_2\leq\autonorm{x}_1$ for any $x\in\mathbb{R}^n$. Finally we verify the conclusion by recursion.
    \qed
\end{proof}

\begin{remark}
    The above results show that with a constant stepsize, both $q$-RGF and $q$-SGF converge to the optimal point with a linear rate. It is easy to find that the corresponding iteration complexities are $\mathcal{O}(\kappa^{\frac{1}{q-1}}\ln\frac{1}{\epsilon}) $ and $\mathcal{O}((n^{\frac{q}{2}}\kappa)^{\frac{1}{q-1}}\ln\frac{1}{\epsilon})$, respectively.
    Note that for the classical Lipschitz smooth case (i.e. $q=2$), the RHS of the RGF result \eqref{eq:RGF_Deter_result} will be reduced to
    $$(1-\kappa^{-1})^K\autopar{f(x_0)-f^\star},$$
    which is the same as \cite[Theorem 1]{Karimi2016} and \cite[Theorem 11]{attouch2010proximal}; while for SGF \eqref{eq:SGF_Deter_result}, it will be
    $$(1-(n\kappa)^{-1})^K\autopar{f(x_0)-f^\star},$$
    which recovers the result in \cite[Theorem 13]{beznosikov2020biased}\footnote{Note that $\frac{1}{n}\,\|\nabla f(x)\|_1^{-1}\sign(\nabla f(x))$ is an $\frac{1}{n}$-approximate compressor of $\nabla f(x)$ as mentioned in \cite{karimireddy2019error}, so we can apply \cite[Theorem 13]{beznosikov2020biased} to recover the result.}. So we can conclude that our results extend the classical results in the Lipschitz smoothness case.
\end{remark}

\subsection{Convergence Rates in The Stochastic Case}
Attaining the full gradient may be expensive in practical applications due to the large data size, where stochastic optimization algorithms are more applicable~\cite{robbins1951stochastic,bottou2018optimization}. So here we further the discussion into the case where the function $f$ can be written as 
\begin{equation}
    \label{eq:stochastoc_objective}
    f(x)\triangleq\EE_\xi[f_\xi(x)],
\end{equation}
where $\xi\sim\Xi$ is a random variable following some unknown distribution $\Xi$, and we have only access to finite samples of $\Xi$. To formalize the discussion, we impose the following assumption on the gradient estimator.
\begin{assumption}[Gradient Estimator]
    \label{defn:unbiased_gradient}
    For any given $x\in\mathbb{R}^n$ and $\xi\sim\Xi$, we assume $\mathbb{E}_\xi\automedpar{\nabla f_\xi(x)}=\nabla f(x)$ and $\mathbb{E}_\xi\autonorm{\nabla f_\xi(x)-\nabla f(x)}_2^2\leq \sigma^2$.
\end{assumption}

The above assumption on the gradient estimator is very common in stochastic optimization literature \cite{ghadimi2013stochastic,bottou2018optimization}. Next we discuss the stochastic counterpart of $q$-RGF/SGF algorithm, where we use $\nabla f_\xi$ to build the gradient estimator, also we call the corresponding algorithms stochastic $q$-RGF/SGF.

\subsubsection{Stochastic $q$-RGF}
We present the convergence result of stochastic $q$-RGF in the following theorem. For convenience, we denote $\psi\triangleq\frac{q}{q-1}\in[2,+\infty]$.

\begin{theorem}[Convergence Rate of Stochastic RGF]
    \label{theorem4}
    Suppose that the function $f$ and $f_\xi$ in \eqref{eq:stochastoc_objective} satisfies Assumption \ref{assume:gradient_dominance}, \ref{assume:Holder_smooth} and \ref{defn:unbiased_gradient}, if we run 
    $q$-RGF (\eqref{eq:qRGF} and \eqref{eq:euler}) while replacing the full gradient by its unbiased estimator $g(x)\triangleq\frac{1}{B}\sum_{i=1}^{B}f_{\xi_i}(x)$ with batch size $B\in\mathbb{N}^+$ and $\{\xi_1, \xi_2,\cdots,\xi_B\}$ are randomly drawn samples from $\Xi$. Then for stochastic $q$-RGF if we set $\eta = (\psi L)^{\frac{1}{1-q}}, B=(\frac{2\cdot\autopar{2\sigma}^\psi\mu^{\frac{1}{1-q}}}{\epsilon})^{\frac{2}{\psi}},$ we have
    \begin{equation*}
        \begin{split}
            \mathbb{E}\automedpar{f(x_K)-f^\star}
            \leq
            \autopar{1-\frac{2}{(2\psi)^\psi}\kappa^{\frac{1}{1-q}}}^K\autopar{f(x_0)-f^\star}+\frac{\epsilon}{2}.
        \end{split}
    \end{equation*}
\end{theorem}

\begin{proof}
    Following the definitions of smoothness, we have
    \begin{equation*}
        \begin{split}
            f(x_{k+1})
            \leq\ &
            f(x_k)+\autoprod{\nabla f(x_k),x_{k+1}-x_{k}}+\frac{L}{q}\autonorm{x_{k+1}-x_k}_2^q\\
            =\ &
            f(x_k)-\eta\autoprod{\nabla f(x_k)-g(x_k)+g(x_k),\frac{g(x_k)}{\autonorm{g(x_k)}_2^{\frac{q-2}{q-1}}}}+\frac{L\eta^q}{q}\frac{\autonorm{g(x_k)}_2^q}{\autonorm{g(x_k)}_2^{\frac{q-2}{q-1}}}\\
            =\ &
            f(x_k)-\autopar{\eta-\frac{L\eta^q}{q}}\autonorm{g(x_k)}_2^{\psi}-\eta\autoprod{\nabla f(x_k)-g(x_k),\frac{g(x_k)}{\autonorm{g(x_k)}_2^{\frac{q-2}{q-1}}}}\\
            \leq\ &
            f(x_k)-\autopar{\eta-\frac{L\eta^q}{q}}\autonorm{g(x_k)}_2^{\psi}+\eta\autopar{\frac{1}{q}\autonorm{\frac{g(x_k)}{\autonorm{g(x_k)}_2^{\frac{q-2}{q-1}}}}_2^q+\frac{1}{\psi}\autonorm{\nabla f(x_k)-g(x_k)}_2^{\psi}}\\
            =\ &
            f(x_k)-\autopar{\frac{\eta}{\psi}-\frac{L\eta^q}{q}}\autonorm{g(x_k)}_2^{\psi}+\frac{\eta}{\psi}\autonorm{\nabla f(x_k)-g(x_k)}_2^{\psi},
        \end{split}
    \end{equation*}
    where the second inequality comes from the Young's inequality. Then note that by Jensen's inequality,
    \begin{equation*}
        \label{eq:Jensen_SFE-RGF}
        \begin{split}
            \autonorm{\nabla f(x_k)}_2^{\psi}
            =
            2^{\psi}\autonorm{\frac{1}{2}\autopar{\nabla f(x_k)-g(x_k)+g(x_k)}}_2^{\psi}
            \leq\ &
            2^{\psi-1}\autopar{\autonorm{g(x_k)}_2^{\psi}+\autonorm{\nabla f(x_k)-g(x_k)}_2^{\psi}},
        \end{split}
    \end{equation*}
   which induces 
   $$-\autonorm{g(x_k)}_2^{\psi}\leq-2^{1-\psi}\autonorm{\nabla f(x_k)}_2^{\psi}+\autonorm{\nabla f(x_k)-g(x_k)}_2^{\psi}.$$
   Next, note that the setting of $\eta$ will ensure the coefficient of $\autonorm{g(x_k)}_2^{\psi}$ to be negative, so we have
    \begin{equation*}
        \begin{split}
            f(x_{k+1})
            \leq\ &
            f(x_k)-\autopar{\frac{\eta}{\psi}-\frac{L\eta^q}{q}}\autopar{2^{1-\psi}\autonorm{\nabla f(x_k)}_2^{\psi}-\autonorm{\nabla f(x_k)-g(x_k)}_2^{\psi}}
            +\frac{\eta}{\psi}\autonorm{\nabla f(x_k)-g(x_k)}_2^{\psi}\\
            =\ &
            f(x_k)-2^{1-\psi}\autopar{\frac{\eta}{\psi}-\frac{L\eta^q}{q}}\autonorm{\nabla f(x_k)}_2^{\psi}+\autopar{\frac{2\eta}{\psi}-\frac{L\eta^q}{q}}\autonorm{\nabla f(x_k)-g(x_k)}_2^{\psi},
        \end{split}
    \end{equation*}
    then by taking the expectation on both sides and subtracting both sides by $f^\star$, with the bounded variance and gradient dominance, we have
    \begin{equation*}
        \begin{split}
            \mathbb{E}\automedpar{f(x_{k+1})-f^\star}
            \leq\ &
            \mathbb{E}\automedpar{\autopar{1-2^{1-\psi}\autopar{\frac{\eta}{\psi}-\frac{L\eta^q}{q}}p\mu^{\frac{1}{q-1}}}\autopar{f(x_k)-f^\star}+\autopar{\frac{2\eta}{\psi}-\frac{L\eta^q}{q}}\autopar{\frac{\sigma^2}{B}}^\frac{\psi}{2}},
        \end{split}
    \end{equation*}
    then set $\eta=(pL)^{\frac{1}{1-q}}$, $B=b^2=(\frac{2\cdot\autopar{2\sigma}^{\psi}\mu^{\frac{1}{1-q}}}{\epsilon})^{\frac{2}{\psi}}$ and
    $K=\lceil\frac{\autopar{2\psi}^{\psi}}{2\kappa^{\frac{1}{1-q}}}\log\autopar{\frac{2\Delta}{\epsilon}}\rceil,$
    we have
    \begin{equation*}
        \begin{split}
            \mathbb{E}\automedpar{f(x_K)-f^\star}
            \leq\ &
            \mathbb{E}\automedpar{\autopar{1-\frac{2}{(2\psi)^{\psi}}\kappa^{\frac{1}{1-q}}}\autopar{f(x_k)-f^\star}+\frac{2\sigma^{\psi}}{\psi^{\psi}b^{\psi}}L^{\frac{1}{1-q}}}\\
            \leq\ &
            \mathbb{E}\automedpar{\autopar{1-\frac{2}{(2\psi)^{\psi}}\kappa^{\frac{1}{1-q}}}^K\Delta+\frac{2\sigma^{\psi}}{\psi^{\psi}b^{\psi}}L^{\frac{1}{1-q}}\sum_{i=0}^{K-1}\autopar{1-\frac{2}{(2\psi)^{\psi}}\kappa^{\frac{1}{1-q}}}^i}\\
            \leq\ &
            \mathbb{E}\automedpar{\autopar{1-\frac{2}{(2\psi)^{\psi}}\kappa^{\frac{1}{1-q}}}^K\Delta+\frac{2\sigma^{\psi}}{\psi^{\psi}b^{\psi}}L^{\frac{1}{1-q}}\cdot\frac{(2\psi)^{\psi}}{2}\kappa^{\frac{1}{q-1}}}\\
            \leq\ &
            \mathbb{E}\automedpar{\exp{\autopar{-\frac{2\kappa^{\frac{1}{1-q}}}{(2\psi)^{\psi}}K}}\Delta+\frac{(2\sigma)^{\psi}\mu^{\frac{1}{1-q}}}{b^{\psi}}}
            \leq
            \frac{\epsilon}{2}+\frac{\epsilon}{2}=\epsilon,
        \end{split}
    \end{equation*}
    while the total sample complexity is (note that $\frac{1}{\psi}+\frac{1}{q}=1$, $q\in(1,2]$, which implies $\psi\in[2,+\infty)$)
    \begin{equation*}
        \begin{split}
            K\cdot b^2
            \geq\ &
            \frac{\autopar{2\psi}^{\psi}}{2\kappa^{\frac{1}{1-q}}}\log\autopar{\frac{2\Delta}{\epsilon}}\cdot \autopar{\frac{2\cdot\autopar{2\sigma}^{\psi}\mu^{\frac{1}{1-q}}}{\epsilon}}^{\frac{2}{\psi}}\\
            =\ &
            \mathcal{O}\autopar{\sigma^2\kappa^{\frac{1}{q-1}}\mu^{-\frac{2}{q}}\psi^{\psi}\epsilon^{-\frac{2}{\psi}}\log\frac{\Delta}{\epsilon}},
        \end{split}
    \end{equation*}
    which verifies the conclusion.
    \qed
\end{proof}

\begin{remark}
Here we set the batch size $B$ as large as $\mathcal{O}(\epsilon^{-2/\psi})$. We argue that such batch size with dependence on $\epsilon$ is common in stochastic optimization literature, e.g., \cite{ghadimi2016stochastic,Fangetal2018}. In the literature, there are several algorithms that use only a single sample. In such case, the result above indicates that we can still get a linear convergence to the neighborhood of the optimal point, whereas applying (non-trivial) mini-batch drives the update to the optimal point precisely.
Also note that existing results resorting to a single sample generally require additional assumptions like bounded gradient norm or the use of a decaying stepsize, e.g., \cite[Theorem 4]{Karimi2016}. Currently, there seem to be some challenges preventing us from extending our analysis to the single sample setting without enforcing extra conditions, we leave this problem to future work.
\end{remark}

\subsubsection{Stochastic $q$-SGF}
The analysis of $q$-SGF in the stochastic case is a bit more complicated. Indeed, note that previously our discussions are based on $\ell_2$-norm, here however, following the argument in \cite{balles2020geometry}, we change the gradient dominance and Lipschitz smoothness assumption to hold in $\ell_\infty$-norm (and its dual norm $\ell_1$). 
\begin{assumption}[$\ell_\infty$-Gradient Dominance and Lipschitz Smoothness]
    \label{assume:Stoc_SGF_Modified_Assumption}
    We assume the function $f$ is $\mu$-gradient dominated of order $q$ and $L$-Lipschitz smooth of order $q$ under $\ell_\infty$-norm (and its dual norm $\ell_1$), i.e., for any $x,y \in\mathbb{R}^n$, we have
    \begin{equation*}
        \begin{split}
            \frac{q-1}{q}\|\nabla f(x)\|_1^{\frac{q}{q-1}}
            \geq 
            \mu^{\frac{1}{q-1}}(f(x) - f^\star),
            \quad
            \autonorm{\nabla f(y)-\nabla f(x)}_1
            \leq
            L\autonorm{y-x}_\infty^{q-1}.
        \end{split}
    \end{equation*}
\end{assumption}
Similarly the new smoothness condition above implies that 
$$
f(y)\leq f(x)+\autoprod{\nabla f(x), y-x}+\frac{L}{q}\autonorm{y-x}_\infty^q,\quad \forall\,x,y \in\mathbb{R}^n.
$$ 
And for the gradient estimator, we need the following assumption.
\begin{assumption}[SPB Estimator \cite{safaryan2021stochastic}]
    \label{eq:Stoc_SGF_unbiased_gradient_estimator}
    For given $x\in\mathbb{R}^n$ and $\xi\in\Xi$, we assume
    $$
    \mathbb{E}_\xi\automedpar{\nabla f_\xi(x)}=\nabla f(x),\quad
    \mathbb{E}_\xi\autonorm{\nabla f_\xi(x)-\nabla f(x)}_1^2\leq \sigma^2.
    $$
    We further assume that there exists $B^*\in\mathbb{N}^+$ such that for any $B\geq B^*$, the (mini-batch) gradient estimator $g(x)\triangleq\frac{1}{B}\sum_{i=1}^B\nabla f_{\xi_i}(x)$, where $\{\xi_1, \xi_2,\cdots,\xi_B\}$ are randomly drawn samples from $\Xi$, satisfies the \emph{success probability bounds (SPB) property}, i.e., for each component of $g(x)$ and $\nabla f(x)$ (denoted as $g_i(x)$ and $\nabla_i f(x)$), $\exists\,p^*\in(\frac{1}{2},1]$ such that for any $i\in\autobigpar{1,2,\cdots,n}$, we have
    \begin{equation*}
        \mathbb{P}\autopar{\sign\autopar{\nabla_i f(x)}=\sign\autopar{g_i(x)}}\geq p^*>\frac{1}{2}.
    \end{equation*}
\end{assumption}

Note that similar assumptions under $\ell_\infty$-norm (and its dual norm $\ell_1$) are widely adapted in the literature of sign-based algorithms, e.g., \cite{Karimi2016,bernstein2018signsgd,bernstein2018signsgd2,balles2020geometry}. Here the extra SPB assumption, proposed in \cite{safaryan2021stochastic}, is very important in the convergence analysis of stochastic sign-based algorithms.
With the above preparation, we summarize the result of stochastic SGF in the following theorem.

\begin{theorem}[Convergence Rate of Stochastic SGF]
    \label{theorem5}
    Suppose the function $f$ satisfies the setting in Assumption \ref{assume:Stoc_SGF_Modified_Assumption}. Consider $q$-SGF (\eqref{eq:qSGF} and \eqref{eq:euler}) with gradient estimator $g(x)$ with a batch size $B$ satisfying Assumption \ref{eq:Stoc_SGF_unbiased_gradient_estimator}. If we set 
    $\eta=(\frac{L2^{\psi-1}}{2p^*-1})^{\frac{1}{1-q}},\ B=\max\{(\frac{2\mu^{\frac{1}{1-q}}}{q\epsilon})^{\frac{2}{\psi}}\autopar{2p^*-1}^{-4}\sigma^2, B^*\},$
    then we have
    \begin{equation*}
        \begin{split}
            \mathbb{E}\automedpar{f(x_K)-f^\star}
            \leq
            \autopar{1-\autopar{2p^*-1}^{\psi}\autopar{\kappa 2^{\psi-1}}^{\frac{1}{1-q}}}^K\autopar{f(x_0)-f^\star}+\frac{\epsilon}{2}.
        \end{split}
    \end{equation*}
\end{theorem}

\begin{proof}
    Following Assumption \ref{assume:Stoc_SGF_Modified_Assumption} and \ref{eq:Stoc_SGF_unbiased_gradient_estimator}, we have
    \begin{equation*}
        \begin{split}
            f(x_{k+1})
            \leq\ &
            f(x_k)+\autoprod{\nabla f(x_k),x_{k+1}-x_{k}}+\frac{L}{q}\autonorm{x_{k+1}-x_k}_\infty^q\\
            =\ &
            f(x_k)-\eta\autonorm{g(x_k)}_1^{\frac{1}{q-1}}\autoprod{\nabla f(x_k),\sign\autopar{g(x_k)}}+\frac{L\eta^q}{q}\autonorm{g(x_k)}_1^{\frac{q}{q-1}},
        \end{split}
    \end{equation*}
    conditioning on $x_k$, we have
    \begin{equation*}
        \label{eq:Stoc_SGF_conditioning}
        \begin{split}
            \mathbb{E}\automedpar{f(x_{k+1})\,|\,x_k}
            \leq
            f(x_k)-\mathbb{E}\automedpar{\eta\autonorm{g(x_k)}_1^{\frac{1}{q-1}}\autoprod{\nabla f(x_k),\sign\autopar{g(x_k)}}\,|\,x_k}+\mathbb{E}\automedpar{\frac{L\eta^q}{q}\autonorm{g(x_k)}_1^{\frac{q}{q-1}}\,|\,x_k},
        \end{split}
    \end{equation*}
    note that for each component of $g(x_k)$, we have
    \begin{equation*}
        \begin{split}
            &\mathbb{E}\automedpar{\autonorm{g(x_k)}_1^{\frac{1}{q-1}}\sign\autopar{g_i(x_k)}\,|\,x_k}\\
            =\ &
            \mathbb{E}\Bigg[\autonorm{g(x_k)}_1^{\frac{1}{q-1}}\mathbb{P}\autopar{\sign\autopar{\nabla_i f(x_k)}=\sign\autopar{g_i(x_k)}}\sign\autopar{\nabla_i f(x_k)}\\
            &\qquad
            -
            \autonorm{g(x_k)}_1^{\frac{1}{q-1}}\mathbb{P}\autopar{\sign\autopar{\nabla_i f(x_k)}\neq\sign\autopar{g_i(x_k)}}\sign\autopar{\nabla_i f(x_k)}
            \,|\,x_k\Bigg]\\
            =\ &
            \mathbb{E}\automedpar{\autonorm{g(x_k)}_1^{\frac{1}{q-1}}\autopar{2\mathbb{P}\autopar{\sign\autopar{\nabla_i f(x_k)}=\sign\autopar{g_i(x_k)}}-1}\sign\autopar{\nabla_i f(x_k)}\,|\,x_k},
        \end{split}
    \end{equation*}
    then with Assumption \ref{eq:Stoc_SGF_unbiased_gradient_estimator} and the setting that $B\geq B^*$, we have the property
    $\mathbb{P}\autopar{\sign\autopar{\nabla_i f(x_k)}=\sign\autopar{g_i(x_k)}}\geq p^*>\frac{1}{2},$
    which implies
    \begin{equation*}
        \begin{split}
            &\mathbb{E}\automedpar{f(x_{k+1})\,|\,x_k}\\
            \leq\ &
            f(x_k)-\mathbb{E}\automedpar{\eta\autonorm{g(x_k)}_1^{\frac{1}{q-1}}\autopar{2p^*-1}\autoprod{\sign\autopar{\nabla f(x_k)},\nabla f(x_k)}
            +\frac{L\eta^q}{q}\autonorm{g(x_k)}_1^{\frac{q}{q-1}}\,|\,x_k}\\
            \leq\ &
            f(x_k)-\mathbb{E}\automedpar{\eta\autopar{2p^*-1}\autonorm{g(x_k)}_1^{\frac{1}{q-1}}\autonorm{\nabla f(x_k)}_1\,|\,x_k}+\mathbb{E}\automedpar{\frac{L\eta^q}{q}\autonorm{g(x_k)}_1^{\frac{q}{q-1}}\,|\,x_k}\\
            \leq\ &
            f(x_k)-\eta\autopar{2p^*-1}\autonorm{\nabla f(x_k)}_1^{\frac{q}{q-1}}+\frac{L\eta^q}{q}\mathbb{E}\automedpar{\autonorm{g(x_k)}_1^{\frac{q}{q-1}}\,|\,x_k}
        \end{split}
    \end{equation*}
    where the last inequality comes from the convexity of $\autonorm{\cdot}_1^{\frac{1}{q-1}}$. Then for the last term in the RHS, by decomposition, we have
    \begin{equation*}
        \begin{split}
            \autonorm{g(x_k)}_1^{\psi}
            =\ &
            2^{\psi}\autonorm{\frac{\nabla g(x_k)-f(x_k)+f(x_k)}{2}}_1^{\psi}
            \leq
            2^{\psi-1}(\autonorm{f(x_k)}_1^{\psi}+\autonorm{\nabla f(x_k)-g(x_k)}_1^{\psi})
        \end{split}
    \end{equation*}
    so we have (recall that $\psi=\frac{q}{q-1}$)
    \begin{equation*}
        \label{eq:Stoc_SGF_Before_Telescope}
        \begin{split}
            &\mathbb{E}\automedpar{f(x_{k+1})\,|\,x_k}\\
            \leq\ &
            f(x_k)-\eta\autopar{2p^*-1}\autonorm{\nabla f(x_k)}_1^{\frac{q}{q-1}}\\
            &\qquad\qquad\qquad
            +\frac{L\eta^q2^{\psi-1}}{q}\mathbb{E}\automedpar{\autonorm{f(x_k)}_1^{\psi}+\autonorm{\nabla f(x_k)-g(x_k)}_1^{\psi}\,|\,x_k}\\
            \leq\ &
            f(x_k)-\autopar{\autopar{2p^*-1}\eta-\frac{L\eta^q2^{\psi-1}}{q}}\autonorm{\nabla f(x_k)}_1^{\frac{q}{q-1}}+\frac{L\eta^q2^{\psi-1}}{q}\autopar{\frac{\sigma^2}{B}}^{\frac{\psi}{2}}\\
            \leq\ &
            f(x_k)-\autopar{\autopar{2p^*-1}\eta-\frac{L\eta^q2^{\psi-1}}{q}}\frac{q\mu^{\frac{1}{q-1}}}{q-1}\autopar{f(x_k)-f^\star}+\frac{L\eta^q2^{\psi-1}}{q}\autopar{\frac{\sigma^2}{B}}^{\frac{\psi}{2}},
        \end{split}
    \end{equation*}
    so take $\eta\equiv(\frac{L2^{\psi-1}}{2p^*-1})^{\frac{1}{1-q}}$ and
    $$
    B\geq b^2=\autopar{\frac{2\mu^{\frac{1}{1-q}}}{q}}^{\frac{2}{\psi}}\autopar{2p^*-1}^{-4}\sigma^2\epsilon^{-\frac{2}{\psi}},
    \quad
    K=\autoceil{\autopar{2p^*-1}^{-\psi}\autopar{\kappa 2^{\psi-1}}^{\frac{1}{q-1}}\ln{\frac{2\Delta}{\epsilon}}},
    $$
    we have
    \begin{equation*}
        \begin{split}
            &\mathbb{E}\automedpar{f(x_K)-f^\star}\\
            \leq\ &
            \mathbb{E}\automedpar{\autopar{1-\frac{1}{\psi}\autopar{2p^*-1}^{\psi}\autopar{L2^{\psi-1}}^{\frac{1}{1-q}}\frac{q\mu^{\frac{1}{q-1}}}{q-1}}\autopar{f(x_{K-1})-f^\star}+\frac{\autopar{L2^{\psi-1}}^{\frac{1}{1-q}}}{q\autopar{2p^*-1}^{\psi}}\autopar{\frac{\sigma^2}{b^2}}^{\frac{\psi}{2}}}\\
            =\ &
            \mathbb{E}\automedpar{\autopar{1-\autopar{2p^*-1}^{\psi}\autopar{\kappa 2^{\psi-1}}^{\frac{1}{1-q}}}\autopar{f(x_{K-1})-f^\star}+\frac{\autopar{L2^{\psi-1}}^{\frac{1}{1-q}}}{q\autopar{2p^*-1}^{\psi}}\autopar{\frac{\sigma^2}{b^2}}^{\frac{\psi}{2}}}\\
            \leq\ &
            \autopar{1-\autopar{2p^*-1}^{\psi}\autopar{\kappa 2^{\psi-1}}^{\frac{1}{1-q}}}^K\autopar{f(x_0)-f^\star}\\
            &\qquad\qquad\qquad\qquad
            +\frac{\autopar{L2^{\psi-1}}^{\frac{1}{1-q}}}{q\autopar{2p^*-1}^{\psi}}\autopar{\frac{\sigma^2}{b^2}}^{\frac{\psi}{2}}\cdot\frac{1}{\autopar{2p^*-1}^{\psi}\autopar{\kappa 2^{\psi-1}}^{\frac{1}{1-q}}}\\
            \leq\ &
            \exp\autopar{-\autopar{2p^*-1}^{\psi}\autopar{\kappa 2^{\psi-1}}^{\frac{1}{1-q}}K}\Delta+\frac{\autopar{\mu}^{\frac{1}{1-q}}}{q\autopar{2p^*-1}^{2\psi}}\autopar{\frac{\sigma^2}{b^2}}^{\frac{\psi}{2}}\\
            =\ &
            \frac{\epsilon}{2}+\frac{\epsilon}{2}
            =
            \epsilon,
        \end{split}
    \end{equation*}
    which concludes the proof.
    \qed
\end{proof}

\begin{remark}
    Theorem~\ref{theorem4} and \ref{theorem5} show that stochastic RGF/SGF drives the function value to linearly converge to the $\mathcal{O}(\epsilon)$-neighborhood of the optimal value. As a comparison, \cite{lei2019stochastic} studied the convergence guarantee of stochastic gradient descent (SGD), which can be viewed as stochastic $2$-RGF, under a similar setting. Also for stochastic SGF, we notice that recently \cite{li2021faster} provided a similar convergence result of stochastic SGF in the classical setting (i.e., $q=2$), their result also requires a lower bound on the probability of estimation correctness, which corresponds to the SPB assumption above. Our result can be viewed as an extension of their work.
\end{remark}

\section{Extensions to Non-Uniform Smoothness and Gradient Dominance}
The parameter settings above often rely on the knowledge of the objective function, e.g., the constant smoothness parameter (in Assumption~\ref{assume:gradient_dominance} and \ref{assume:Holder_smooth}), which may be hard to attain in practice. One remedy to the issue is the line-search strategy. For example, consider $q$-RGF~\eqref{eq:euler}, referring to the result in Theorem \ref{theorem3}, if we have no access to the value of the H\"older continuity parameter $(L, q)$, we can extend the idea in \cite{nesterov2015universal} and add an extra line-search step based on a modification of the Armijo rule before updating the variable. Similar line-search strategy idea also appeared in existing optimization literature, e.g., \cite{jin2021high}.

\subsection{Problem Setting and Algorithm Design}
To formalize the discussion, we still consider the unconstrained minimization problem for a given cost function $f$. But different from the settings in Section~\ref{sec:convergence_discrete}, here we assume the following:
\begin{assumption}[Non-Uniform Smoothness and Gradient Dominance]
    \label{assume:varying_Q}
    We assume the function $f$ satisfies that 
    \begin{itemize}
        \item The function $f$ is continuously differentiable.
        \item The function $f$ is $\autopar{\mu(x), q}$-gradient dominated near a strict local minimizer $x^*\in\mathbb{R}^n$ where $q\in(1,2]$, i.e., for any $x\in\mathbb{R}^n$,
        \begin{equation*}
            \frac{q-1}{q}\autonorm{\nabla f(x)}_2^{\frac{q}{q-1}}\geq\mu(x)^{\frac{1}{q-1}}\autopar{f(x)-f^*}.
        \end{equation*}
        \item The function $f$ is $\autopar{S(x), q(x)}$-Lipschitz smooth where $q(x)\in(1,2]$, i.e., for any $x,y \in\mathbb{R}^n$,
        \begin{equation*}
            f(y)\leq f(x)+\autoprod{\nabla f(x), y-x}+\frac{S(x)}{q(x)}\autonorm{y-x}_2^{q(x)}.
        \end{equation*}
    \end{itemize}
\end{assumption}
Here the assumption extends the setting previously in Assumption~\ref{assume:gradient_dominance} and \ref{assume:Holder_smooth}, and allows the modulus of smoothness and gradient dominance to be both non-uniform. As a comparison, such a setting also appeared in \cite{mei2021leveraging}. Here we further incorporate the line-search strategy with our proposed RGF/SGF, resulting the Varying-RGF/SGF algorithm, which is presented in Algorithm~\ref{alg:Line-Search-RGF-SGF}. 
\begin{algorithm}[ht]
    \caption{Varying-RGF/SGF with Line-Search (Vary-RGF/SGF-LS)}
    \label{alg:Line-Search-RGF-SGF}
    \begin{algorithmic}[1]
        \REQUIRE Initial point $x_0$, exponent $q$, initial (small enough) guess $L_0$, accuracy $\epsilon$
        \FORALL{$k = 0,1,..., K$}
            \STATE Attain the gradient $\nabla f(x_k)$
            \STATE Find the smallest integer $i_k\geq 0$ such that for
            \begin{equation*}
                x_k^+\triangleq 
                \begin{cases}
                    x_k-(q^{i_k}L_k)^{\frac{1}{1-q}}\nabla f(x_k)/\|\nabla f(x_k)\|_2^{\frac{q-2}{q-1}}
                    & \text{if using RGF}\\
                    x_k-\autopar{n^{\frac{q}{2}}q^{i_k}L_k}^{\frac{1}{1-q}}\autonorm{\nabla f(x_k)}_1^{\frac{1}{q-1}}\sign\autopar{\nabla f(x_k)} & \text{if using SGF}
                \end{cases}
            \end{equation*}
            it satisfies that
            \begin{equation}
                \label{eq:LS_condition}
                f(x_k^+)\leq f(x_k)+\autoprod{\nabla f(x_k),x_k^+-x_k}+\frac{q^{i_k}L_k}{q}\autonorm{x_k^+-x_k}_2^q+\frac{\epsilon}{2}
            \end{equation}
            \STATE Update: $x_{k+1}=x_k^+$, $L_{k+1}=q^{i_k}L_k$
        \ENDFOR
    \end{algorithmic}
\end{algorithm}

\subsection{Convergence Analysis}
To proceed with the analysis, according to \cite{nesterov2015universal,diakonikolas2021complementary}, we know that we only need to consider a varying smoothness parameter.
% \colorword{\bf MB. Not clear to me}{red}. 
First, the following result from \cite{nesterov2015universal} is important in the analysis.
\begin{lemma}[\cite{nesterov2015universal}]
\label{lm:holder_exponent_change}
    If a function $f$ is $(S,p)$-Lipschitz smooth with $p\in(1,2]$, then for any $q\in(1,2]$, $\delta>0$ and $L\geq [\frac{2(q-p)}{pq\delta}]^{\frac{q-p}{p}}S^{\frac{q}{p}}$,
    we have for any $x$ and $y$
    \begin{equation*}
        f(y)\leq f(x)+\autoprod{\nabla f(x), y-x}+\frac{L}{q}\autonorm{y-x}_2^q+\frac{\delta}{2}
    \end{equation*}
\end{lemma}

The above result ensures the existence of $i_k$ in \eqref{eq:LS_condition}. We can then fix the exponent $q\in(1,2]$ and focus on the search of the smoothness coefficient $L(x,q,\delta)$. Without loss of generality, we always assume the initial guess $L_0$ is small enough. The main convergence result for RGF is summarized below.
\begin{theorem}[Vary-RGF-LS]
    \label{thm:vary-RGF-LS}
    Suppose that $f$ satisfies Assumption \ref{assume:varying_Q}, define
    \begin{equation*}
        L(x_k, q, \Tilde{\epsilon})\triangleq\automedpar{\frac{2(q-q(x_k))}{q(x_k)q\Tilde{\epsilon}}}^{\frac{q-q(x_k)}{q(x_k)}}S(x_k)^{\frac{q}{q(x_k)}},
        \quad
        \Tilde{\epsilon}\triangleq\kappa^{\frac{1}{1-q}}\epsilon,
    \end{equation*}
    where
    $\kappa\triangleq\underset{0\leq k\leq K}{\max} \frac{L_{k+1}}{\mu(x_k)}$,
    by running Algorithm~\ref{alg:Line-Search-RGF-SGF} with RGF, we have
    \begin{equation*}
        f(x_K)-f^*\leq (1-\kappa^{\frac{1}{1-q}})^K\autopar{f(x_0)-f^*}+\frac{\epsilon}{2},
    \end{equation*}
    to attain an $\epsilon$-optimal solution, the iteration complexity is $\mathcal{O}(\kappa^{\frac{1}{q-1}}\ln\frac{1}{\epsilon})$.
\end{theorem}

\begin{proof}
    First note that Lemma~\ref{lm:holder_exponent_change} implies that for any $x$, we have
    \begin{equation*}
        f(x)\leq f(x_k)+\autoprod{\nabla f(x_k), x-x_k}+\frac{L(x_k, q, \Tilde{\epsilon})}{q}\autonorm{x-x_k}_2^q+\frac{\Tilde{\epsilon}}{2},
    \end{equation*}
    then note that $L_{k+1}=q^{i_k}L_k\leq L(x_k, q, \Tilde{\epsilon})\leq L$, and by the definition of $x_k^+$, we have
    \begin{equation*}
        x_{k+1}-x_k=-\autopar{L_{k+1}}^{\frac{1}{1-q}}\nabla f(x_k)/\autonorm{\nabla f(x_k)}_2^{\frac{q-2}{q-1}},
    \end{equation*}
    and
    \begin{equation*}
        \begin{split}
            f(x_{k+1})-f^*
            \leq\ &
            f(x_k)-f^*-\autopar{L_{k+1}}^{\frac{1}{1-q}}\autonorm{\nabla f(x_k)}_2^\frac{q}{q-1}+\frac{L_{k+1}}{q}\autopar{L_{k+1}}^{\frac{q}{1-q}}\autonorm{\nabla f(x_k)}_2^\frac{q}{q-1}+\frac{\Tilde{\epsilon}}{2}
            \\
            \leq\ & 
            f(x_k)-f^*-\autopar{\autopar{L_{k+1}}^{\frac{1}{1-q}}-\autopar{L_{k+1}}^{\frac{1}{1-q}}/q}\autonorm{\nabla f(x_k)}_2^\frac{q}{q-1}+\frac{\Tilde{\epsilon}}{2}\\
            \leq\ &
            f(x_k)-f^*-\autopar{\frac{\mu(x_k)}{L_{k+1}}}^{\frac{1}{q-1}}\autopar{f(x_k)-f^*}+\frac{\Tilde{\epsilon}}{2}\\
            \leq\ &
            \autopar{1-\kappa^{\frac{1}{1-q}}}\autopar{f(x_k)-f^*}+\frac{\Tilde{\epsilon}}{2},
        \end{split}
    \end{equation*}
    by induction, we have
    \begin{equation}
        \label{eq:Vary-RGF-recursion}
        \begin{split}
            f(x_K)-f^*
            \leq\ &
            \autopar{1-\kappa^{\frac{1}{1-q}}}^K\autopar{f(x_0)-f^*}+\frac{\Tilde{\epsilon}}{2}\cdot\sum_{i=0}^K\autopar{1-\kappa^{\frac{1}{1-q}}}^i\\
            \leq\ &
            \autopar{1-\kappa^{\frac{1}{1-q}}}^K\autopar{f(x_0)-f^*}+\frac{\Tilde{\epsilon}}{2}\cdot\frac{1}{1-\autopar{1-\kappa^{\frac{1}{1-q}}}}\\
            =\ &
            \autopar{1-\kappa^{\frac{1}{1-q}}}^K\autopar{f(x_0)-f^*}+\frac{\epsilon}{2}
        \end{split}
    \end{equation}
    which concludes the proof, here the complexity is implied by assuming the first term above to be smaller than $\frac{\epsilon}{2}$.
    \qed
\end{proof}

Similarly, we can derive the following result for the SGF algorithm.
\begin{theorem}[Vary-SGF-LS]
    Given the same settings in Theorem \ref{thm:vary-RGF-LS},
    denote
    \begin{equation*}
        L(x_k, q, \Tilde{\epsilon})\triangleq\automedpar{\frac{2(q-q(x_k))}{q(x_k)q\Tilde{\epsilon}}}^{\frac{q-q(x_k)}{q(x_k)}}S^{\frac{q}{q(x_k)}},
        \quad
        \Tilde{\epsilon}\triangleq\autopar{n^{\frac{q}{2}}\kappa}^{\frac{1}{1-q}}\epsilon,
    \end{equation*}
    where
    $\kappa\triangleq\underset{0\leq k\leq K}{\max} \frac{L_{k+1}}{\mu(x_k)}$,
    if we run Algorithm \ref{alg:Line-Search-RGF-SGF} with SGF, we have
    \begin{equation*}
        f(x_K)-f^*\leq(1-(n^{\frac{q}{2}}\kappa)^{\frac{1}{1-q}})^K\autopar{f(x_k)-f^*}+\frac{\epsilon}{2},
    \end{equation*}
    to attain an $\epsilon$-optimal solution, the iteration complexity is $\mathcal{O}((n^{\frac{q}{2}}\kappa)^{\frac{1}{q-1}}\ln\frac{1}{\epsilon})$.
\end{theorem}

\begin{proof}
    First Lemma~\ref{lm:holder_exponent_change} implies that for any $x$, we have
    \begin{equation*}
        \begin{split}
            f(x)
            \leq\ &
            f(x_k)+\autoprod{\nabla f(x_k), x-x_k}+\frac{L(x_k, q, \Tilde{\epsilon})}{q}\autonorm{x-x_k}_2^q+\frac{\Tilde{\epsilon}}{2}\\
            =\ &
            f(x_k)+\autopar{\eta-\frac{L(x_k, q, \Tilde{\epsilon})n^{\frac{q}{2}}\eta^q}{q}}\autonorm{\nabla f(x_k)}_1^{\frac{q}{q-1}}+\frac{\Tilde{\epsilon}}{2}
        \end{split}
    \end{equation*}
    note that $L_{k+1}=q^{i_k}L_k\leq L(x_k, q, \Tilde{\epsilon})\leq L$,
    then by the definition of $x_k^+$ and the setting $\eta=\autopar{n^{\frac{q}{2}}q^{i_k}L_k}^{\frac{1}{1-q}}$, with the fact that $\|x\|_2\leq\|x\|_1$, we have
    \begin{equation*}
        \begin{split}
            f(x_{k+1})-f^*
            \leq\ & 
            f(x_k)-f^*-\frac{q-1}{q}\eta\autonorm{\nabla f(x_k)}_2^\frac{q}{q-1}+\frac{\Tilde{\epsilon}}{2}\\
            \leq\ &
            f(x_k)-f^*-\autopar{\frac{\mu(x_k)}{L_{k+1}n^{\frac{q}{2}}}}^{\frac{1}{q-1}}\autopar{f(x_k)-f^*}+\frac{\Tilde{\epsilon}}{2}\\
            \leq\ &
            \big(1-\big(n^{\frac{q}{2}}\kappa\big)^{\frac{1}{1-q}}\big)\autopar{f(x_k)-f^*}+\frac{\Tilde{\epsilon}}{2}
        \end{split}
    \end{equation*}
    following similar induction in \eqref{eq:Vary-RGF-recursion} and the definition of $\Tilde{\epsilon}$ above, we have
    \begin{equation*}
        \begin{split}
            f(x_K)-f^*
            \leq\ &
            \autopar{1-\autopar{n^{\frac{q}{2}}\kappa}^{\frac{1}{1-q}}}^K\autopar{f(x_0)-f^*}+\frac{\epsilon}{2}
        \end{split}
    \end{equation*}
    which concludes the proof, while the complexity is implied by assuming the first term above to be smaller than $\frac{\epsilon}{2}$.
    \qed
\end{proof}

\section{Numerical Experiments}
\label{sec:num_experiments}
In this section, we use several experiments to verify the effectiveness of our proposed algorithms with fixed parameters.

\subsection{Numerical Experiments on Academic Examples}
Let us show first on a simple numerical example that the acceleration in convergence, proven in continuous time, can translate to convergence acceleration in discrete time. We consider the following function: $f:\mathbb{R}\rightarrow \mathbb{R}$ and 
\begin{equation}    
    \label{eq:numerical_academic}
    f(x)\triangleq \frac{1}{q}\autoabs{x}^{q},
\end{equation}
which is $(q-1)^{q-1}$-gradient dominated of order $q$, and $2$-Lipschitz smooth of order $q$ when $1<q\leq 2$ \cite{lei2020fine,ourICMLpaperanon}. Here we set $q=1.95$ for the objective function. 

The test results of the proposed RGF and SGF algorithms compared with other algorithms is presented in Figure \ref{fig:RGF_GD_toy_result} below. The figure shows both sensitivities to stepsizes and performances of algorithms, measured by the optimality gap $f(x)-f^*$ where $f^*$ is the optimal value which is 
$0$. We can see that $q$-RGF and $q$-SGF, compared to the classical one (2-RGF) and an over-large one (10-RGF), can accommodate larger stepsize and attain better performances with the best-tuned stepsize\footnote{The experiment results show that $q$-RGF and $q$-SGF will drive the function value to exactly 0 with certain stepsizes, which generates plots that may exceed the lower limit in figures.}. Also we further plot the theoretical bounds provided in Theorem \ref{theorem3} (in red), we can see that the proposed algorithms attain linear convergence patterns and their optimal stepsize choice is close to those mentioned in Theorem \ref{theorem3}.

\begin{figure}[ht]
    \centering
    \includegraphics[width=0.47\linewidth]{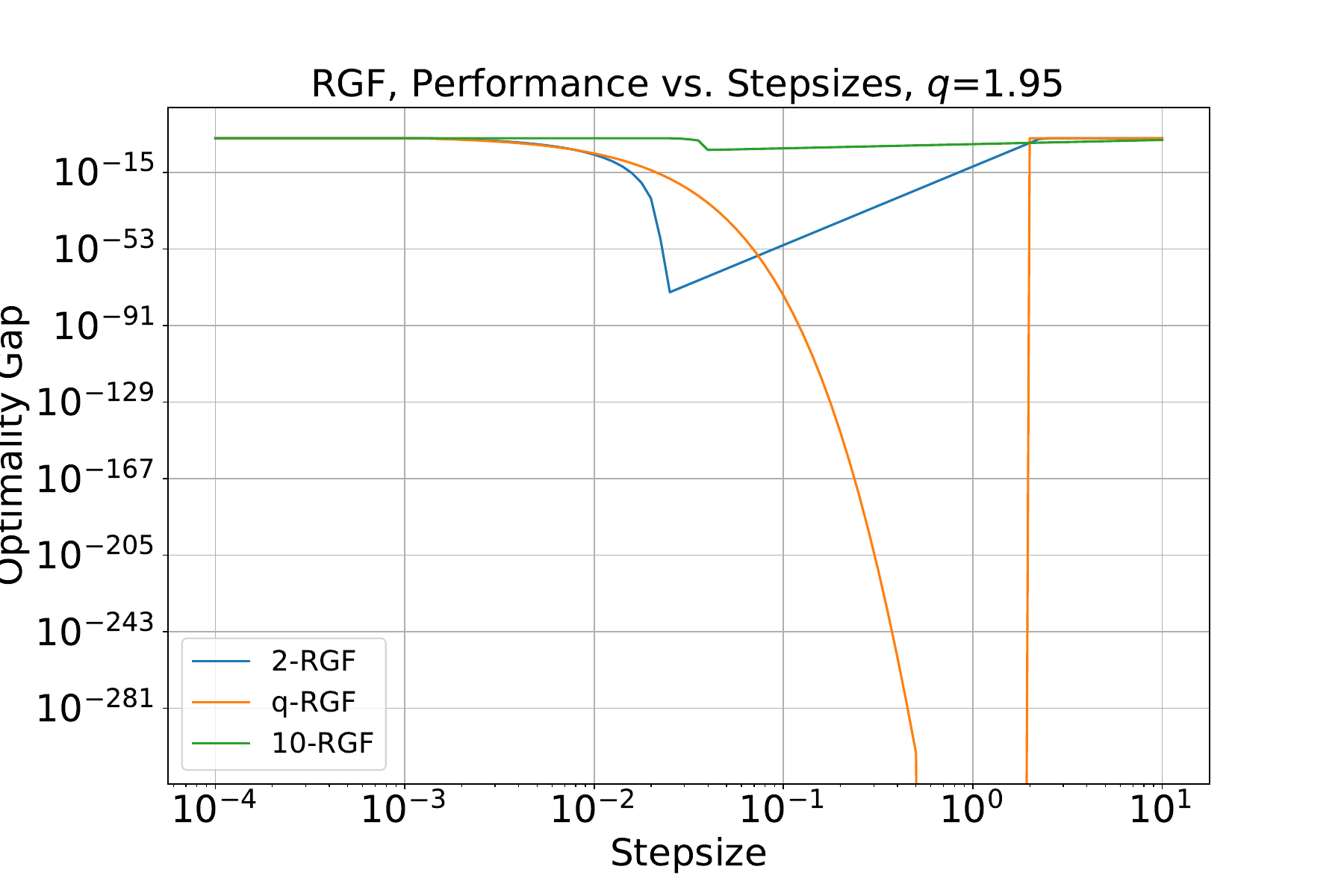}
    \includegraphics[width=0.47\linewidth]{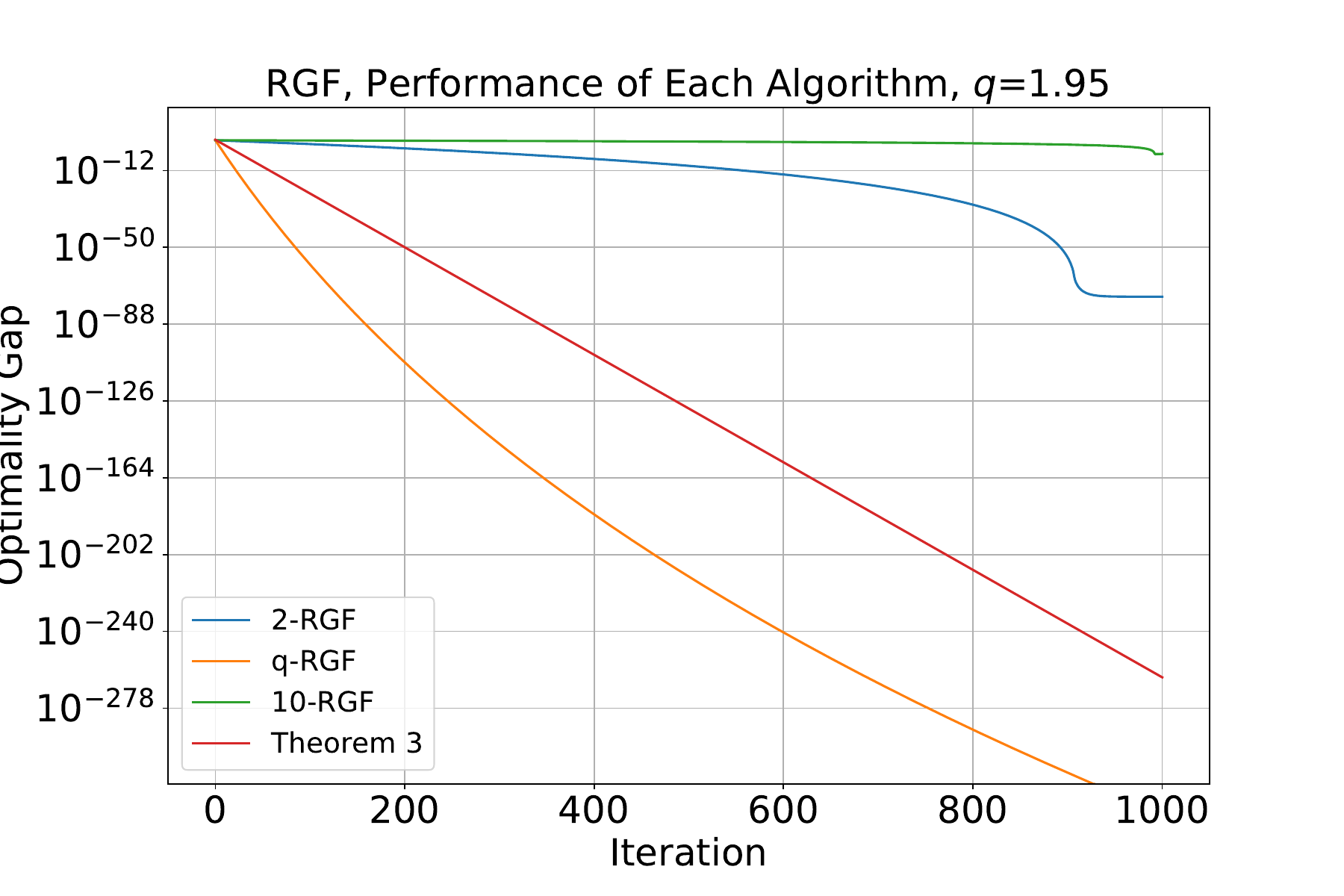}
    \includegraphics[width=0.47\linewidth]{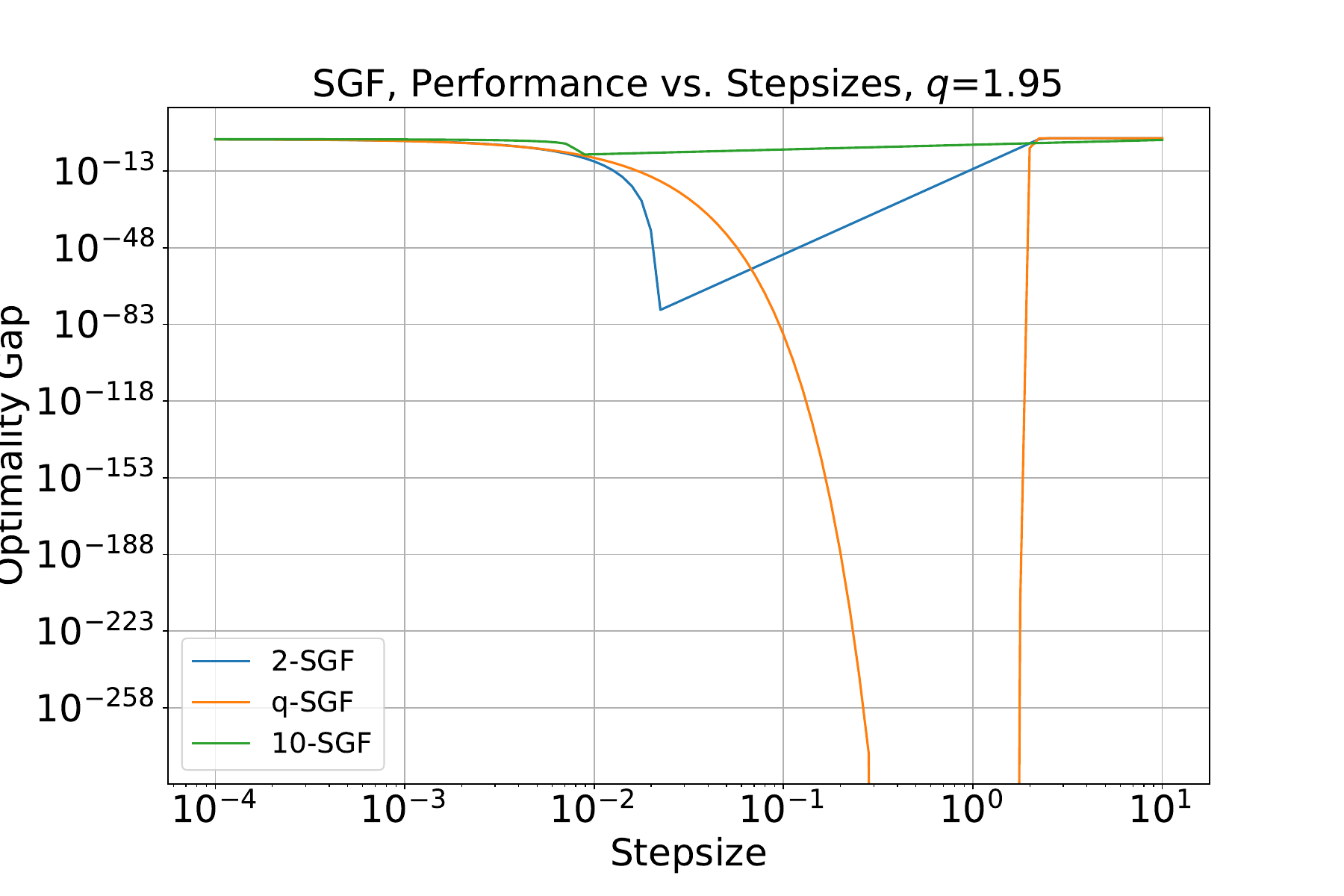}
    \includegraphics[width=0.47\linewidth]{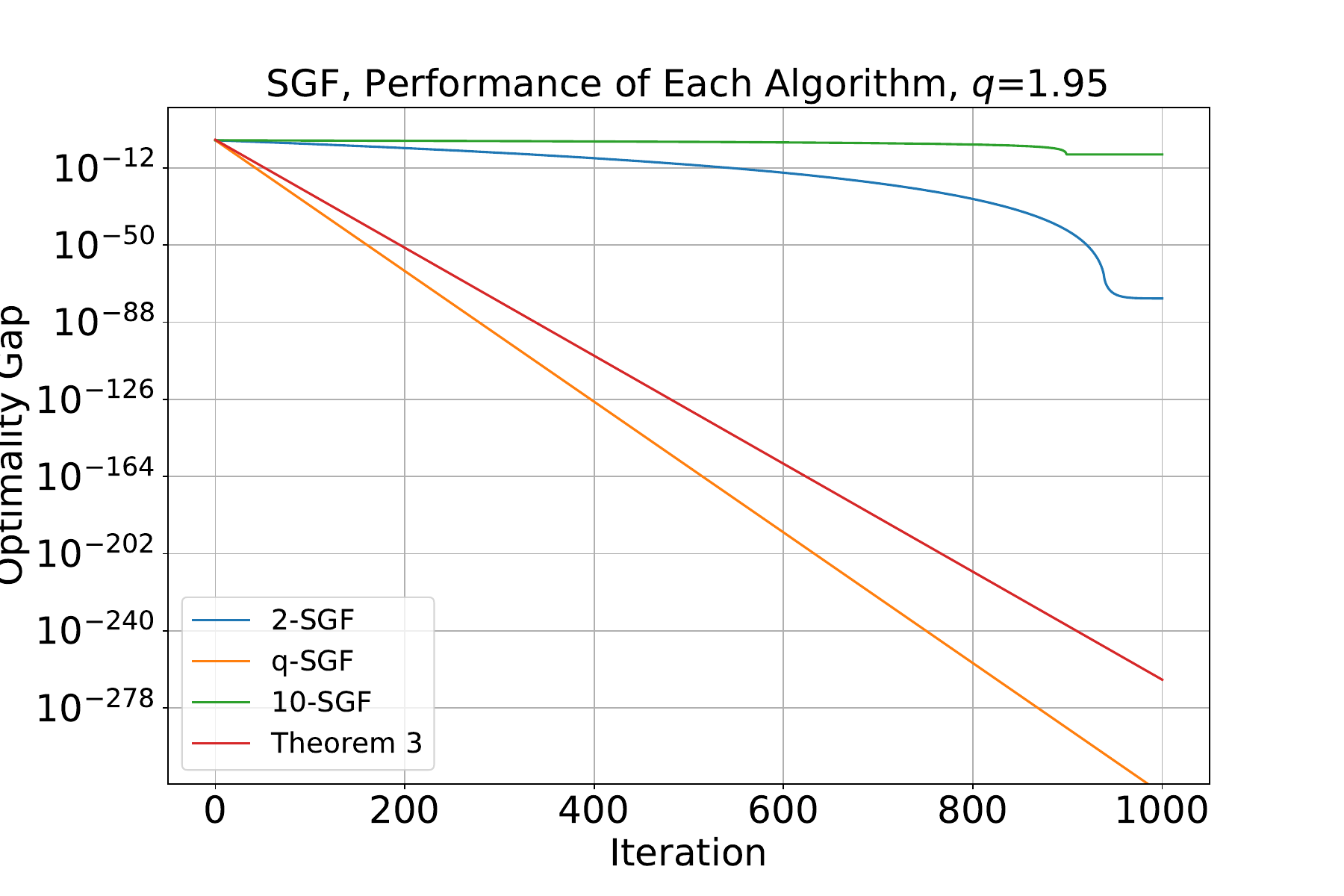}
    \caption{Results of discretization of RGF and SGF on Academic Examples~\eqref{eq:numerical_academic}}
    \label{fig:RGF_GD_toy_result}
\end{figure}

\subsection{Numerical Experiments on Real-world Data}
We report here the results of experiments using deep neural network training on the SVHN dataset\footnote{http://ufldl.stanford.edu/housenumbers/}~\cite{yuval2011reading} in Figure~\ref{fig:gpu-euler-VGG16-SVHN} and Figure~\ref{fig::gpu-euler-VGG16-SVHN-gputime}. Here the experiment is run on a Titan X NVIDIA GPU with 12GB memory, and we use the PyTorch platform to conduct all tests. The results are reported below. 

We tested the performance of Euler discretization of the proposed flows against Adam, and GD algorithms. 
We chose to train the VGG16 CNN model with cross-entropy loss. We divided the dataset into a training set of $74$ batches with $1000$ images each, and a test set of $27$ batches of $1000$ images each, and ran $20$ epochs of training over all the training batches. We tested Euler discretization of $q$-RGF ($c=1,\;q=2.1,\;\eta=0.04\;$), and Euler discretization of $q$-SGF ($c=10^{-3},\;q=2.1,\;\eta=0.04\;$) against GD ($\eta=0.1$) and Adam.
In Figures \ref{fig:gpu-euler-VGG16-SVHN} and \ref{fig::gpu-euler-VGG16-SVHN-gputime}, we can see that both algorithms, Euler $q$-RGF and Euler $q$-SGF, converge faster than GD and Adam for these tests, and reach the same performance on the test set.

\begin{figure}[ht]
    \centering
    \includegraphics[width=0.47\linewidth]{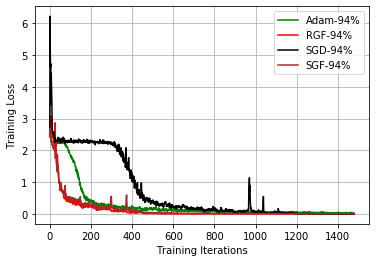}
    \includegraphics[width=0.48\linewidth]{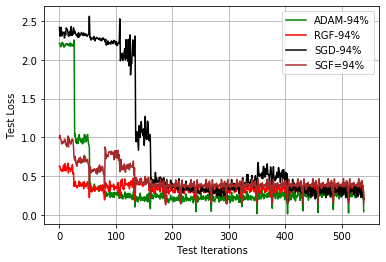}
    \caption{Losses for several optimization algorithms run on the GPU-VGG16-SVHN experiment: Train loss (left), test loss (right)}
    \label{fig:gpu-euler-VGG16-SVHN}
\end{figure}
\begin{figure}[ht]
    \centering
    \includegraphics[width=0.47\linewidth]{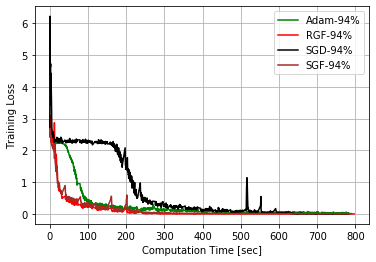}
    \caption{Training loss vs. computation time for the GPU-VGG16-SVHN experiment}
    \label{fig::gpu-euler-VGG16-SVHN-gputime}
\end{figure}

\section{Conclusion}
In this paper, we investigated connections between optimization algorithms and their continuous-time representations (dynamical systems). We reviewed two families of first-order optimization flows for continuous-time optimization, namely the $q$-RGF and $q$-SGF, which are characterized by their finite-time convergence. We then proposed several algorithms based on the forward Euler discretization of RGF/SGF. Using tools from hybrid systems control theory, we provided closeness guarantees of the discrete solutions to their corresponding continuous-time trajectories. Then we discussed the convergence rates of the proposed discrete algorithms in various settings. To the best of our knowledge, this is the first analysis in the literature considering these algorithms in the (structural) nonconvex regime. Finally, we conducted some validation numerical experiments. As future directions, it is interesting to further incorporate the stochastic first-order algorithms in the discretization of RGF and SGF, and study their corresponding performance guarantees; also how to extend the study into more general cases like PL* and KL conditions~\cite{liu2020toward,Lojasiewicz1963}.

\appendix

\section{Discontinuous Systems and Differential Inclusions}
\label{subsec:filippov}
Recall that for an initial value problem (IVP): $x(0) = x_0$ and
\begin{equation}
\dot{x}(t) = F(x(t)) 
\label{eq:IVP}%
\end{equation}
with $F:\mathbb{R}^n\to\mathbb{R}^n$, the typical way to check for existence of solutions is by establishing continuity of $F$. Likewise, to establish uniqueness of the solution, we typically seek Lipschitz continuity. When $F$ is discontinuous, we may understand~(\ref{eq:IVP}) as the Filippov differential inclusion
\begin{equation}
\dot{x}(t) \in \mathcal{K}[F](x(t)),
\label{eq:filippov}
\end{equation}
where $\mathcal{K}[F]:\mathbb{R}^n\rightrightarrows\mathbb{R}^n$ denotes the Filippov set-valued map given by
\begin{equation}
\mathcal{K}[F](x) \triangleq \bigcap_{\delta > 0}\bigcap_{\mu(S)=0}\cobar F(B_\delta(x)\setminus S),
\label{eq:filippovsetvaluedmap}
\end{equation}
where $\mu$ denotes the usual Lebesgue measure and $\cobar$ the convex closure, {i.e.} closure of the convex hull $\co$. For more details, see~\cite{Paden1987}. We can generalize~(\ref{eq:filippov}) to the differential inclusion \cite{Bacciotti1999}
\begin{equation}
\dot{x}(t) \in \mathcal{F}(x(t)),
\label{eq:differentialinclusion}
\end{equation}
where $\mathcal{F}:\mathbb{R}^n\rightrightarrows\mathbb{R}^n$ is some set-valued map.

\begin{definition}[Carath\'{e}odory/Filippov solutions]
We say that $x: [0,\tau)\to\mathbb{R}^n$ with $0<\tau\leq\infty$ is a \mbox{\emph{Carath\'{e}odory solution}} to~(\ref{eq:differentialinclusion}) if $x(\cdot)$ is absolutely continuous and~(\ref{eq:differentialinclusion}) is satisfied a.e. in every compact subset of $[0,\tau)$. Furthermore, we say that $x(\cdot)$ is a \emph{maximal} Carath\'{e}odory solution if no other Carath\'{e}odory solution $x'(\cdot)$ exists with $x = x'|_{[0,\tau)}$. If  $\mathcal{F} = \mathcal{K}[F]$, then Carath\'{e}odory solutions are referred to as \emph{Filippov solutions}.
\end{definition}

For a comprehensive overview of discontinuous systems, including sufficient conditions for existence (Proposition 3) and uniqueness (Propositions 4 and 5) of Filippov solutions, see \cite{Cortes2008}. In particular, it can be established that Filippov solutions to~(\ref{eq:IVP}) exist, provided that the following assumption (Assumption~\ref{ass:existenceFilippov}) holds.

\begin{assumption}[Existence of Filippov solutions]
$F:\mathbb{R}^n\to\mathbb{R}^n$ is defined almost everywhere (a.e.) and is Lebesgue-measurable in a \mbox{non-empty} open neighborhood $U\subset \mathbb{R}^n$ of $x_0\in\mathbb{R}^n$. Further, $F$ is locally essentially bounded in $U$, \emph{i.e.}, for every point $x\in U$, $F$ is bounded a.e. in some bounded neighborhood of $x$.
\label{ass:existenceFilippov}
\end{assumption}

More generally, Carath\'{e}odory solutions to~(\ref{eq:differentialinclusion}) exist (now with arbitrary $x_0\in\mathbb{R}^n$), provided that the following assumption (Assumption~\ref{ass:K}) holds.

\begin{assumption}[Existence of Carath\'{e}odory solutions]
$\mathcal{F}:\mathbb{R}^n\rightrightarrows\mathbb{R}^n$ has nonempty, compact, and convex values, and is \mbox{\emph{upper semi-continuous}}.
\label{ass:K}
\end{assumption}

\cite{Filippov1988} proved that, for the Filippov set-valued map $\mathcal{F} = \mathcal{K}[F]$, Assumptions~\ref{ass:existenceFilippov} and~\ref{ass:K} are equivalent (with arbitrary $x_0\in\mathbb{R}^n$ in Assumption~\ref{ass:existenceFilippov}). 

Uniqueness of the solution requires further assumptions. Nevertheless, we can characterize the Filippov set-valued map in a similar manner to Clarke's generalized gradient, as seen in the following proposition.

\begin{proposition}[Theorem 1 of~\cite{Paden1987}]
Under Assumption~\ref{ass:existenceFilippov}, we have
\begin{equation}
\mathcal{K}[F](x) = \left\{\lim_{k\to\infty} F(x_k): x_k\in\mathbb{R}^n\setminus (\mathcal{N}_F\cup S) \textnormal{ s.t. } x_k\to x\right\}
\end{equation}
for some (Lebesgue) zero-measure set $\mathcal{N}_F\subset\mathbb{R}^n$ and any other zero-measure set $S\subset\mathbb{R}^n$. In particular, if $F$ is continuous at a fixed $x$, then \mbox{$\mathcal{K}[F](x) = \{F(x)\}$}.
\end{proposition}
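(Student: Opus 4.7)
The statement is the Paden--Sastry characterization of the Filippov set-valued map via pointwise limits off a null set. The canonical null set should be chosen as $\mathcal{N}_F\triangleq\{y:F\text{ is not approximately continuous at }y\}$, equivalently (componentwise) the complement of the Lebesgue points of $F$. Because Assumption~\ref{ass:existenceFilippov} makes $F$ Lebesgue-measurable and locally essentially bounded, the Lebesgue differentiation theorem gives $\mu(\mathcal{N}_F)=0$, and at every $y\notin\mathcal{N}_F$ the level set $\{z:\|F(z)-F(y)\|<\epsilon\}$ has Lebesgue density $1$ at $y$ for every $\epsilon>0$. This density property is the workhorse of the remainder of the argument.

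Using it, I would first show that the right-hand side is insensitive to the extra null set $S$. Given $v=\lim_k F(x_k)$ with $x_k\to x$ and $x_k\notin\mathcal{N}_F$, and any other null set $S'$, density $1$ at $x_k$ lets me pick $z_k\in B_{1/k}(x_k)\setminus(\mathcal{N}_F\cup S')$ with $\|F(z_k)-F(x_k)\|<1/k$. Then $z_k\to x$, $F(z_k)\to v$, and $z_k\notin\mathcal{N}_F\cup S'$, so enlarging $S$ does not shrink the set of attainable limits. Denote the (common) set of such limits by $L(x)$.

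For the forward inclusion $L(x)\subseteq\mathcal{K}[F](x)$, fix $v\in L(x)$, a null set $S$ appearing in the definition of $\mathcal{K}[F](x)$, and $\delta>0$. The perturbation above produces a witnessing sequence in $\mathbb{R}^n\setminus(\mathcal{N}_F\cup S)$ that eventually lies in $B_\delta(x)$, so $v\in\overline{F(B_\delta(x)\setminus S)}\subseteq\cobar F(B_\delta(x)\setminus S)$; intersecting over $\delta$ and $S$ yields $v\in\mathcal{K}[F](x)$. For the reverse direction, take $v\in\mathcal{K}[F](x)$. For each $n$, $v$ lies within $1/n$ of a convex combination of values $F(y_j^{(n)})$ with $y_j^{(n)}\in B_{1/n}(x)\setminus\mathcal{N}_F$; by Carath\'eodory's theorem one may keep at most $n+1$ terms, and local essential boundedness bounds the relevant $F$-values uniformly on a neighborhood of $x$. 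A diagonal extraction then yields finitely many sequences tending to $x$ outside $\mathcal{N}_F$ whose $F$-values produce $v$ as a limit in $L(x)$ when $v$ is extreme in $\mathcal{K}[F](x)$, and as a convex combination of such limits in general, giving the closed convex hull identity $\mathcal{K}[F](x)=\cobar L(x)$.

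The continuity corollary is then immediate: if $F$ is continuous at $x$, then $F(x_k)\to F(x)$ along every $x_k\to x$, so $L(x)=\{F(x)\}$ and hence $\mathcal{K}[F](x)=\cobar\{F(x)\}=\{F(x)\}$. The main technical obstacle is the density / approximate-continuity perturbation: this is the one place where Assumption~\ref{ass:existenceFilippov} is truly used, and it is what renders the right-hand side independent of $S$. A secondary subtlety is that, strictly speaking, the right-hand side in the statement should be read as $\cobar L(x)$; this closed-convex-hull is degenerate in the continuity case that is actually invoked later in the paper, so the simplified form $\{F(x)\}$ is correct where it is needed.
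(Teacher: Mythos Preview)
The paper does not prove this proposition; it is simply quoted as Theorem~1 of Paden--Sastry (1987) and used as a black box. So there is no ``paper's own proof'' to compare against. That said, your sketch is a faithful outline of the standard Paden--Sastry argument: take $\mathcal{N}_F$ to be the complement of the points of approximate continuity of $F$ (a null set by Lebesgue differentiation, since $F$ is measurable and locally essentially bounded), use the density-one property at such points to show the limit set is insensitive to removing any further null set $S$, and then match the two sides via Carath\'eodory and a diagonal extraction bounded by local essential boundedness.

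Two remarks. First, your caveat at the end is exactly right and worth stating more forcefully: the identity as printed in the paper is missing the closed convex hull. The correct Paden--Sastry statement is
\[
\mathcal{K}[F](x) \;=\; \cobar\Bigl\{\lim_{k\to\infty}F(x_k):\, x_k\to x,\ x_k\notin\mathcal{N}_F\cup S\Bigr\},
\]
and without the $\cobar$ the equality can fail (take any $F$ that genuinely requires convexification at a discontinuity). The paper only ever invokes the proposition at points where $F$ is continuous, in which case the limit set is the singleton $\{F(x)\}$ and the convex hull is vacuous, so the omission is harmless for the paper's purposes but is a misstatement nonetheless. Second, in your reverse inclusion, be explicit that Carath\'eodory bounds the number of terms by the fixed ambient dimension plus one, independent of the approximation level; this is what makes the diagonal extraction go through cleanly and yields $\mathcal{K}[F](x)\subseteq\cobar L(x)$.
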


For instance, for GF~(\ref{eq:gradientflow}), we have \mbox{$\mathcal{K}[-\nabla f](x) =\{-\nabla f(x)\}$} for every $x\in\mathbb{R}^n$, provided that $f$ is continuously differentiable. Furthermore, if $f$ is only locally  Lipschitz continuous and regular (see Definition~\ref{def:regularfunction}), then $\mathcal{K}[-\nabla f](x) = -\partial f(x)$, where
\begin{equation}
    \partial f(x) \triangleq \left\{\lim_{k\to\infty} \nabla f(x_k): x_k\in \mathbb{R}^n\setminus\mathcal{N}_f \textnormal{ s.t. } x_k\to x\right\}
\end{equation}
denotes Clarke's generalized gradient~\cite{Clarke1981} of $f$, with $\mathcal{N}_f$ denoting the zero-measure set over which $f$ is not differentiable (Rademacher's theorem). It can be established that $\partial f$ coincides with the subgradient of $f$, provided that $f$ is convex. Therefore, the GF~(\ref{eq:gradientflow}) interpreted as Filippov differential inclusion may also be seen as a continuous-time variant of subgradient descent methods.

\section{Finite-Time Stability of Differential Inclusions}
\label{subsec:FTSsemiS}

We are now ready to focus on extending some notions from traditional Lipschitz continuous systems to differential inclusions. 

\begin{definition}\label{equi_point}
We say that $x^\star\in\mathbb{R}^n$ is an \emph{equilibrium} of~(\ref{eq:differentialinclusion}) if $x(t) \equiv x^\star$ on some small enough non-degenerate interval is a Carath\'{e}odory solution to~(\ref{eq:differentialinclusion}). In other words, if and only if $0\in \mathcal{F}(x^\star)$. 
We say that~(\ref{eq:differentialinclusion}) is \emph{(Lyapunov) stable} at $x^\star\in\mathbb{R}^n$ if, for every $\varepsilon > 0$, there exists some $\delta > 0$ such that, for every maximal Carath\'{e}odory solution $x(\cdot)$ of~(\ref{eq:differentialinclusion}), we have $\|x_0 - x^\star\|_2 < \delta \implies \|x(t) - x^\star\|_2 < \varepsilon$ for every $t\geq 0$ in the interval where $x(\cdot)$ is defined. Note that, under Assumption~\ref{ass:K}, if~(\ref{eq:differentialinclusion}) is stable at $x^\star$, then $x^\star$ is an equilibrium of~(\ref{eq:differentialinclusion})~\cite{Bacciotti1999}. Furthermore, we say that~(\ref{eq:differentialinclusion}) is \emph{(locally and strongly) asymptotically stable} at $x^\star\in\mathbb{R}^n$ if is stable at $x^\star$ and there exists some $\delta > 0$ such that, for every maximal Carath\'{e}odory solution $x:[0,\tau)\to\mathbb{R}^n$ of~(\ref{eq:differentialinclusion}), if $\|x_0 - x^\star\|_2 < \delta$ then $x(t)\to x^\star$ as $t\to\tau$. Finally,~(\ref{eq:differentialinclusion}) is \emph{(locally and strongly) finite-time stable} at $x^\star$ if it is asymptotically stable and there exists some $\delta>0$ and $T:B_\delta(x^\star)\to [0,\infty)$ such that, for every maximal Carath\'{e}odory solution $x(\cdot)$ of~(\ref{eq:differentialinclusion}) with $x_0\in B_\delta(x^\star)$, we have $\lim_{t\to T(x_0)}x(t) = x^\star$.
\end{definition}

We will now construct a Lyapunov-based criterion adapted from the literature of finite-time stability of Lipschitz continuous systems.

\begin{lemma}
Let $\mathcal{E}(\cdot)$ be an absolutely continuous function satisfying the differential inequality
\begin{equation}
    \dot{\mathcal{E}}(t) \leq -c\,\mathcal{E}(t)^\alpha
    \label{eq:Edotineqlemma}
\end{equation}
a.e. in $t\geq 0$, with $c,\mathcal{E}(0)>0$ and $\alpha < 1$. Then, there exists some $t^\star > 0$ such that $\mathcal{E}(t)>0$ for $t\in [0,t^\star)$ and $\mathcal{E}(t^\star) = 0$. Furthermore, $t^\star > 0$ can be bounded by
\begin{equation}
    t^\star \leq \frac{\mathcal{E}(0)^{1-\alpha}}{c(1-\alpha)},
    \label{eq:tstarboundlemma}
\end{equation}
with this bound tight whenever~(\ref{eq:Edotineqlemma}) holds with equality. In that case, but now with $\alpha\geq 1$, then $\mathcal{E}(t)>0$ for every $t\geq 0$, with $\lim_{t\to\infty}\mathcal{E}(t) = 0$. This will be represented by $t^\star = \infty$, with $\mathcal{E}(\infty) \triangleq \lim_{t\to\infty}\mathcal{E}(t)$.
\label{lemma:Edotineq}
\end{lemma}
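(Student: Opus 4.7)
The plan is to reduce the differential inequality to a linear one by the standard substitution $V(t) \triangleq \mathcal{E}(t)^{1-\alpha}$, and then invoke a comparison argument. First I would introduce the escape time
\[
T \triangleq \sup\{\,t\geq 0 : \mathcal{E}(s) > 0 \text{ for all } s\in[0,t]\,\},
\]
which satisfies $T>0$ by continuity of $\mathcal{E}(\cdot)$ (guaranteed by absolute continuity) together with $\mathcal{E}(0)>0$. On any compact subinterval $[0,\tau]\subset[0,T)$, the function $\mathcal{E}(\cdot)$ is bounded away from $0$, so $y\mapsto y^{1-\alpha}$ restricted to the range of $\mathcal{E}|_{[0,\tau]}$ is Lipschitz; consequently, $V(\cdot)$ is absolutely continuous there and the chain rule applies almost everywhere.

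Next, at every differentiability point of $\mathcal{E}$ inside $[0,T)$, I would compute
\[
\dot V(t) \,=\, (1-\alpha)\,\mathcal{E}(t)^{-\alpha}\,\dot{\mathcal{E}}(t) \,\leq\, (1-\alpha)\,\mathcal{E}(t)^{-\alpha}\bigl(-c\,\mathcal{E}(t)^{\alpha}\bigr) \,=\, -c(1-\alpha),
\]
where the step exploits $1-\alpha>0$ and $\mathcal{E}(t)>0$. Integrating this inequality, valid a.e., against the absolutely continuous $V$ yields
\[
\mathcal{E}(t)^{1-\alpha} \,\leq\, \mathcal{E}(0)^{1-\alpha} - c(1-\alpha)\,t \quad \text{for all } t\in[0,T).
\]
Since the left-hand side is nonnegative, the right-hand side must also be nonnegative on $[0,T)$, which forces $T \leq \mathcal{E}(0)^{1-\alpha}/[c(1-\alpha)]$. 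Taking $t\to T^{-}$ in the displayed bound gives $\mathcal{E}(T^{-})=0$, so I would set $t^\star \triangleq T$ and use continuity of $\mathcal{E}$ to conclude $\mathcal{E}(t^\star)=0$. The tightness claim is immediate: when equality holds throughout, separation of variables on the scalar ODE $\dot y = -cy^{\alpha}$ produces $y(t)^{1-\alpha} = \mathcal{E}(0)^{1-\alpha} - c(1-\alpha)t$, attaining zero exactly at $t^\star = \mathcal{E}(0)^{1-\alpha}/[c(1-\alpha)]$.

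For the addendum with $\alpha\geq 1$, I would treat the two subcases of the equality $\dot{\mathcal{E}} = -c\mathcal{E}^{\alpha}$ separately. If $\alpha=1$, direct integration gives $\mathcal{E}(t) = \mathcal{E}(0)\,e^{-ct}>0$ with $\mathcal{E}(t)\to 0$ as $t\to\infty$. If $\alpha>1$, separation of variables (now with $1-\alpha<0$) yields
\[
\mathcal{E}(t) \,=\, \bigl(\mathcal{E}(0)^{1-\alpha} + c(\alpha-1)\,t\bigr)^{\frac{1}{1-\alpha}},
\]
which is strictly positive for all $t\geq 0$ and decays to $0$ as $t\to\infty$, so the formal convention $t^\star=\infty$ is consistent.

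The only delicate point is the legitimacy of the chain-rule step at the boundary of $[0,T)$: $\mathcal{E}^{-\alpha}$ blows up as $t\to T^{-}$ when $\alpha>0$, so the inequality $\dot V \leq -c(1-\alpha)$ should be established only on compact subintervals $[0,\tau]\subset[0,T)$ and then extended by taking $\tau\to T^{-}$ in the integrated form, where monotonicity of the bound makes the limit unproblematic. This is the main subtlety I would foreground; the remaining manipulations are routine.
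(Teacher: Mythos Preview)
Your approach is essentially identical to the paper's: define the escape time as a supremum, transform via $\mathcal{E}^{1-\alpha}$, integrate the resulting linear differential inequality, and read off the bound on $t^\star$. One minor imprecision: the sentence ``Taking $t\to T^{-}$ in the displayed bound gives $\mathcal{E}(T^{-})=0$'' does not follow from the bound alone (if $T$ is strictly below the claimed upper bound the right-hand side is still positive at $T$); what actually forces $\mathcal{E}(T)=0$ is the finiteness of $T$ combined with its definition as a supremum and continuity of $\mathcal{E}$, which is exactly the contradiction argument the paper spells out and which you do invoke in the next clause. Apart from that slip, your treatment is if anything more careful than the paper's, both on the absolute continuity of $V$ on compact subintervals and on separating the $\alpha=1$ and $\alpha>1$ subcases.
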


\begin{proof}
Suppose that $\mathcal{E}(t)>0$ for every $t\in [0,T]$ with $T > 0$. Let $t^\star$ be the supremum of all such $T$'s, thus satisfying $\mathcal{E}(t) > 0$ for every $t\in [0,t^\star)$. We now study $\mathcal{E}(t^\star)$. First, by continuity of $\mathcal{E}$, it follows that $\mathcal{E}(t^\star) \geq 0$. Now, by rewriting
\begin{equation}\label{b3}
    \dot{\mathcal{E}}(t) \leq -c\,\mathcal{E}(t)^\alpha \iff \frac{\mathrm{d}}{\mathrm{d}t}\left[\frac{\mathcal{E}(t)^{1-\alpha}}{1-\alpha}\right] \leq -c,
\end{equation}
a.e. in $t\in [0,t^\star)$, we can thus integrate to obtain\footnote{Perhaps the integral of the right-hand-side of the equivalence (\ref{b3}) is not defined everywhere, due to the
possibility that the fractional function is not absolutely continuous everywhere, nevertheless, the transition from
(\ref{b3}) to (\ref{b4}) can be obtained by directly applying the results of Corollary 2.4 of \cite{MNPR20}. We thank
Drs. K. Garg, M. Baranwal, and R. Gupta for bringing this fact to our attention}
\begin{equation}\label{b4}
    \frac{\mathcal{E}(t)^{1-\alpha}}{1-\alpha} - \frac{\mathcal{E}(0)^{1-\alpha}}{1-\alpha} \leq -c\,t,
\end{equation}
everywhere in $t\in [0,t^\star)$, which in turn leads to
\begin{equation}
    \mathcal{E}(t) \leq [\mathcal{E}(0)^{1-\alpha} - c(1-\alpha)t]^{1/(1-\alpha)}
    \label{eq:Vdotinesol}
\end{equation}
and
\begin{equation}
    t \leq \frac{\mathcal{E}(0)^{1-\alpha} - \mathcal{E}(t)^{1-\alpha}}{c(1-\alpha)} \leq \frac{\mathcal{E}(0)^{1-\alpha}}{c(1-\alpha)},
    \label{eq:tboundsol}
\end{equation}
where the last inequality follows from $\mathcal{E}(t)>0$ for every $t\in [0,t^\star)$. Taking the supremum in~(\ref{eq:tboundsol}) then leads to the upper bound~(\ref{eq:tstarboundlemma}). Finally, we conclude that $\mathcal{E}(t^\star) = 0$, since $\mathcal{E}(t^\star) > 0$ is impossible given that it would mean, due to continuity of $\mathcal{E}$, that there exists some $T>t^\star$ such that $\mathcal{E}(t)>0$ for every $t\in [0,T]$, thus contradicting the construction of $t^\star$. 

Finally, notice that if $\mathcal{E}$ is such that~(\ref{eq:Edotineqlemma}) holds with equality, then~(\ref{eq:Vdotinesol}) and the first inequality in~(\ref{eq:tboundsol}) hold with equality as well. The tightness of the bound~(\ref{eq:tstarboundlemma}) thus follows immediately. Furthermore, notice that if~$\alpha\geq 1$, and $\mathcal{E}$ is a tight solution to the differential inequality~(\ref{eq:Edotineqlemma}), \emph{i.e.} $\mathcal{E}(t) = [\mathcal{E}(0)^{1-\alpha} - c(1-\alpha)t]^{1/(1-\alpha)}$, then clearly $\mathcal{E}(t)>0$ for every $t\geq 0$ and $\mathcal{E}(t)\to 0$ as $t\to \infty$.
\qed
\end{proof}

\cite{Cortes2005} proposed (Proposition~2.8) a Lyapunov-based criterion to establish finite-time stability of discontinuous systems, which fundamentally coincides with our Lemma~\ref{lemma:Edotineq} for the particular choice of exponent $\alpha=0$. Their proposition was, however, directly based on Theorem~2 of~\cite{Paden1987}. Later, \cite{Cortes2006} proposed a second-order Lyapunov criterion, which, on the other hand, fundamentally translates to $\mathcal{E}(t) \triangleq V(x(t))$ being strongly convex. Finally, \cite{Hui2009} generalized Proposition~2.8 of~\cite{Cortes2005} in their Corollary~3.1, to establish semistability. Indeed, these two results coincide for isolated equilibria.

We now, by exploiting our Lemma~\ref{lemma:Edotineq}, present a novel result generalizing the aforementioned first-order Lyapunov-based results. More precisely, given a Laypunov candidate function $V(\cdot)$, the objective is to set $\mathcal{E}(t) \triangleq V(x(t))$, and we aim to check that the conditions of Lemma~\ref{lemma:Edotineq} hold. To do this, and assuming $V$ to be locally Lipschitz continuous, we first borrow and adapt from~\cite{Bacciotti1999} the definition of \emph{set-valued time derivative} of $V:\mathcal{D}\to\mathbb{R}$ w.r.t. the differential inclusion~(\ref{eq:differentialinclusion}), given by
\begin{equation}
    \dot{V}(x) \triangleq \{a\in\mathbb{R}: \exists v\in \mathcal{F}(x) \textnormal{ s.t. } a = p\cdot v,\, \forall p\in\partial V(x)\},
\end{equation}
for each $x\in\mathcal{D}$. Notice that, under Assumption~\ref{ass:K} for Filippov differential inclusions $\mathcal{F} = \mathcal{K}[F]$, the set-valued time derivative of $V$ thus coincides with with the set-valued Lie derivative $\mathcal{L}_F V(\cdot)$. Indeed, more generally $\dot{V}$ could be seen as a set-valued Lie derivative $\mathcal{L}_\mathcal{F} V$ w.r.t. the set-valued map $\mathcal{F}$.

\begin{definition}
\label{def:regularfunction}
$V(\cdot)$ is said to be \emph{regular} if every directional derivative, given by
\begin{equation}
    V'(x;v) \triangleq \lim_{h\to 0}\frac{V(x+h\,v) - V(x)}{h},
\end{equation}
exists and is equal to
\begin{equation}
    V^\circ(x;v) \triangleq \limsup_{x'\to x\, h\to 0^+}\frac{V(x'+h\,v) - V(x')}{h},
\end{equation}
known as \emph{Clarke's upper generalized derivative}~\cite{Clarke1981}.
\end{definition}

In practice, regularity is a fairly mild and easy to guarantee condition. For instance, it would suffice that $V$ is convex or continuously differentiable to ensure that it is Lipschitz and regular.

\begin{assumption}
$V:\mathcal{D}\to\mathbb{R}$ is locally Lipscthiz continuous and regular, with $\mathcal{D}\subseteq\mathbb{R}^n$ open.
\label{ass:Vregularity}
\end{assumption}

Under Assumption~\ref{ass:Vregularity}, Clarke's generalized gradient
\begin{equation}
    \partial V(x) \triangleq \{p\in\mathbb{R}^n: V^\circ(x;v) \geq p\cdot v, \forall v\in\mathbb{R}^n\}
\end{equation}
is non-empty for every $x\in\mathcal{D}$, and is also given by
\begin{equation}
    \partial V(x) = \left\{\lim_{k\to\infty}\nabla V(x_k): x_k\in\mathbb{R}^n\setminus\mathcal{N}_V \textnormal{ s.t. } x_k\to x\right\},
\end{equation}
where $\mathcal{N}_V$ denotes the set of points in $\mathcal{D}\subseteq\mathbb{R}^n$ where $V$ is not differentiable (recall Rademacher’s theorem) \cite{Clarke1981}.

Through the following lemma (Lemma~\ref{lemma:dVdtisinVdot}), we can formally establish the correspondence between the set-valued time-derivative of $V$ and the derivative of the energy function $\mathcal{E}(t) \triangleq V(x(t))$ associated with an arbitrary Carath\'{e}odory solution $x(\cdot)$ to the differential inclusion~(\ref{eq:differentialinclusion}).

\begin{lemma}[Lemma~1 of~\cite{Bacciotti1999}]
Under Assumption~\ref{ass:Vregularity}, given any Carath\'{e}odory solution \mbox{$x:[0,\tau)\to\mathbb{R}^n$} to~(\ref{eq:differentialinclusion}), then $\mathcal{E}(t)\triangleq V(x(t))$ is absolutely continuous and $\dot{\mathcal{E}}(t) = \frac{\mathrm{d}}{\mathrm{d}t}V(x(t))\in\dot{V}(x(t))$ a.e. in $t\in [0,\tau)$.
\label{lemma:dVdtisinVdot}
\end{lemma}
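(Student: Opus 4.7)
The plan is to establish the two claims of the lemma — absolute continuity of $\mathcal{E}$ and the set-valued chain rule $\dot{\mathcal{E}}(t)\in\dot{V}(x(t))$ — in sequence. For the first, I would fix an arbitrary compact subinterval $[0,T]\subset[0,\tau)$, note that $x([0,T])$ is compact (since $x$ is continuous), and invoke Assumption~\ref{ass:Vregularity} to obtain a Lipschitz constant $L>0$ for $V$ on a compact neighborhood of $x([0,T])$. Since $x(\cdot)$ is absolutely continuous by definition of a Carath\'{e}odory solution, absolute continuity of $\mathcal{E}=V\circ x$ on $[0,T]$ follows from the standard composition rule $|\mathcal{E}(t)-\mathcal{E}(s)|\leq L\,\|x(t)-x(s)\|$ together with the absolute continuity of $x$.

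Consequently, $\dot{\mathcal{E}}(t)$ and $\dot{x}(t)$ both exist at almost every $t\in[0,\tau)$, and by the defining property of the Carath\'{e}odory solution, $\dot{x}(t)\in\mathcal{F}(x(t))$ at a.e.\ such $t$. Fix any such good time $t$, set $v\triangleq\dot{x}(t)$, and write $x(t+h)=x(t)+hv+o(h)$. Using the local Lipschitz bound $|V(x(t)+hv+o(h))-V(x(t)+hv)|\leq L\,|o(h)|$, the one-sided limit from the right becomes
\begin{equation}
\lim_{h\to 0^+}\frac{V(x(t+h))-V(x(t))}{h}=\lim_{h\to 0^+}\frac{V(x(t)+hv)-V(x(t))}{h}=V'(x(t);v),
\end{equation}
provided this directional derivative exists, which it does by regularity of $V$. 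An entirely symmetric computation for $h\to 0^-$ (substituting $k=-h>0$) yields
\begin{equation}
\lim_{h\to 0^-}\frac{V(x(t+h))-V(x(t))}{h}=-V'(x(t);-v).
\end{equation}
Since $\dot{\mathcal{E}}(t)$ exists, the two one-sided limits coincide, so $\dot{\mathcal{E}}(t)=V'(x(t);v)=-V'(x(t);-v)$.

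The key step — and the main technical point I expect to wrestle with — is converting this two-sided statement into an equality that holds for \emph{every} $p\in\partial V(x(t))$, not just the extremal ones. Here I would invoke Clarke regularity twice: by Definition~\ref{def:regularfunction}, $V'(x(t);w)=V^{\circ}(x(t);w)$ for every $w$, and the support-function characterization of the generalized gradient gives $V^{\circ}(x(t);w)=\max_{p\in\partial V(x(t))}p\cdot w$. Applying this with $w=v$ and $w=-v$ respectively, I obtain
\begin{equation}
\dot{\mathcal{E}}(t)=\max_{p\in\partial V(x(t))}p\cdot v \quad\text{and}\quad -\dot{\mathcal{E}}(t)=\max_{p\in\partial V(x(t))}p\cdot(-v)=-\min_{p\in\partial V(x(t))}p\cdot v,
\end{equation}
so the max and the min of $p\mapsto p\cdot v$ over $\partial V(x(t))$ coincide. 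Hence $p\cdot v$ is constant on $\partial V(x(t))$ and equal to $\dot{\mathcal{E}}(t)$, for every $p$ in the generalized gradient.

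Finally, since $v=\dot{x}(t)\in\mathcal{F}(x(t))$, the definition of $\dot{V}(x(t))$ given in the paper — the set of $a\in\mathbb{R}$ such that some $v\in\mathcal{F}(x(t))$ satisfies $a=p\cdot v$ for all $p\in\partial V(x(t))$ — is met by $a=\dot{\mathcal{E}}(t)$ and $v=\dot{x}(t)$. This gives $\dot{\mathcal{E}}(t)\in\dot{V}(x(t))$ at the chosen good $t$, and since the exceptional set is a countable union of null sets, the inclusion holds almost everywhere in $[0,\tau)$, completing the proof. I expect the subtlety to be in justifying that the max\,$=$\,min step genuinely requires regularity (a non-regular Lipschitz $V$ gives only inequalities) and in handling the $o(h)$ remainder cleanly with only local Lipschitz continuity of $V$ on a tube around the trajectory.
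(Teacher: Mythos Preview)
The paper does not supply its own proof of this lemma: it is stated as Lemma~1 of~\cite{Bacciotti1999} and invoked as a black box. Your argument is correct and is, in fact, essentially the proof given in the cited reference. The two ingredients you isolate --- (i) absolute continuity of $V\circ x$ via local Lipschitzness of $V$ on a tube around the compact image $x([0,T])$, and (ii) the squeeze $\max_{p\in\partial V(x(t))}p\cdot v=\min_{p\in\partial V(x(t))}p\cdot v$ obtained by applying regularity to both $v$ and $-v$ --- are exactly the mechanism in Bacciotti--Ceragioli. Your identification of the crucial role of regularity is also correct: without $V'=V^\circ$, the one-sided computation only yields $\dot{\mathcal{E}}(t)\leq V^\circ(x(t);v)=\max_p p\cdot v$ and the reverse inequality from the left side fails to close the gap, so the constancy of $p\mapsto p\cdot v$ on $\partial V(x(t))$ cannot be concluded. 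There is nothing to compare against in the paper itself.
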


We are now ready to state and prove our Lyapunov-based sufficient condition for finite-time stability of differential inclusions.

\begin{theorem}
\label{thm:FTS}
Suppose that Assumptions~\ref{ass:K} and~\ref{ass:Vregularity} hold for some set-valued map $\mathcal{F}:\mathbb{R}^n\rightrightarrows\mathbb{R}^n$ and some function $V:\mathcal{D}\to\mathbb{R}$, where $\mathcal{D}\subseteq\mathbb{R}^n$ is an open and positively invariant neighborhood of a point $x^\star\in\mathbb{R}^n$. Suppose that $V$ is positive definite w.r.t. $x^\star$ and that there exist constants $c>0$ and $\alpha < 1$ such that
\begin{equation}
\sup \dot{V}(x) \leq -c\,V(x)^\alpha
\label{eq:Vdotineq}
\end{equation}
a.e. in $x\in\mathcal{D}$. Then,~(\ref{eq:differentialinclusion}) is finite-time stable at $x^\star$, with settling time satisfying
\begin{equation}
t^\star \leq \frac{V(x_0)^{1-\alpha}}{c(1-\alpha)},
\label{eq:settlingtimebound}
\end{equation}
where $x(0)=x_0$. In particular, any Carath\'{e}odory solution $x(\cdot)$ with $x(0)=x_0\in\mathcal{D}$ will converge in finite time to $x^\star$ under the upper bound~(\ref{eq:settlingtimebound}). Furthermore, if $\mathcal{D}=\mathbb{R}^n$, then~(\ref{eq:differentialinclusion}) is globally finite-time stable. Finally, if $\dot{V}(x)$ is a singleton a.e. in $x\in\mathcal{D}$ and~(\ref{eq:Vdotineq}) holds with equality, then the bound~(\ref{eq:settlingtimebound}) is tight.
\end{theorem}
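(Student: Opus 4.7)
The plan is to reduce the theorem to Lemma~\ref{lemma:Edotineq} via the energy function $\mathcal{E}(t)\triangleq V(x(t))$. Fixing an arbitrary maximal Carath\'{e}odory solution $x:[0,\tau)\to\mathcal{D}$ with $x(0)=x_0\in\mathcal{D}$ (whose existence is guaranteed by Assumption~\ref{ass:K}), Lemma~\ref{lemma:dVdtisinVdot} ensures that $\mathcal{E}$ is absolutely continuous and $\dot{\mathcal{E}}(t)\in\dot V(x(t))$ for a.e.\ $t\in [0,\tau)$. In particular, $\dot{\mathcal{E}}(t)\le\sup\dot V(x(t))$ a.e., so the hypothesis~\eqref{eq:Vdotineq} yields $\dot{\mathcal{E}}(t)\le -c\,\mathcal{E}(t)^\alpha$ a.e. Applying Lemma~\ref{lemma:Edotineq} then produces a finite $t^\star$ with $\mathcal{E}(t)>0$ on $[0,t^\star)$ and $\mathcal{E}(t^\star)=0$, obeying the bound in~\eqref{eq:settlingtimebound}. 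Since $V$ is positive definite w.r.t.\ $x^\star$, $V(x(t^\star))=0$ forces $x(t^\star)=x^\star$, which is the claimed finite-time convergence.

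Next I would extract Lyapunov stability at $x^\star$. For a given $\varepsilon>0$ small enough that $\overline{B_\varepsilon(x^\star)}\subset\mathcal{D}$, the continuity and positive definiteness of $V$ supply $\delta\in(0,\varepsilon)$ such that $V(x_0)<\min_{\|y-x^\star\|=\varepsilon}V(y)$ whenever $\|x_0-x^\star\|<\delta$. Because $\mathcal{E}$ is nonincreasing in view of $\sup\dot V\le 0$ on $\mathcal{D}$, any trajectory starting in this $\delta$-ball cannot cross $\partial B_\varepsilon(x^\star)$, so it stays in $\mathcal{D}$ and remains extendable. Combined with the settling-time bound, this delivers local finite-time stability. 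When $\mathcal{D}=\mathbb{R}^n$, the same argument applies for arbitrary $x_0$, yielding the global version. The tightness claim then reduces to the tight case of Lemma~\ref{lemma:Edotineq}: if $\dot V(x(t))$ is a singleton a.e.\ and~\eqref{eq:Vdotineq} holds with equality, the inclusion $\dot{\mathcal{E}}(t)\in\dot V(x(t))$ collapses to the ODE $\dot{\mathcal{E}}(t)=-c\,\mathcal{E}(t)^\alpha$, which saturates~\eqref{eq:settlingtimebound}.

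The main obstacle is the passage from the state-space condition ``a.e.\ in $x\in\mathcal{D}$'' to the time-axis condition ``a.e.\ in $t\in [0,\tau)$'' along trajectories, since an absolutely continuous curve can, in principle, spend positive time inside a Lebesgue null subset of $\mathcal{D}$. I would handle this by noting that under Assumption~\ref{ass:K} together with the regularity of $V$, the map $x\mapsto\sup\dot V(x)$ is upper semi-continuous wherever defined, so the a.e.\ inequality $\sup\dot V(x)\le -c\,V(x)^\alpha$ extends to every $x\in\mathcal{D}$ by continuity of the right-hand side; the exceptional state-space set thus disappears and the composition with $x(t)$ is unambiguous. As a fallback, one can invoke the measurable-selection argument behind Lemma~\ref{lemma:dVdtisinVdot} in~\cite{Bacciotti1999} to absorb the exceptional set directly into the measure-zero set of bad times. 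Either route closes the chain of inequalities and legitimizes the reduction to Lemma~\ref{lemma:Edotineq}.
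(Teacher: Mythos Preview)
Your proposal follows essentially the same route as the paper: set $\mathcal{E}(t)=V(x(t))$, use Lemma~\ref{lemma:dVdtisinVdot} to obtain $\dot{\mathcal{E}}(t)\in\dot V(x(t))$ a.e., chain this with~\eqref{eq:Vdotineq} to get $\dot{\mathcal{E}}\le -c\,\mathcal{E}^\alpha$, and then invoke Lemma~\ref{lemma:Edotineq}. The only noteworthy differences are that the paper obtains Lyapunov stability by citing Proposition~1 of~\cite{Bacciotti1999} rather than via your direct sublevel-set argument, and that the paper does not comment at all on the ``a.e.\ in $x$'' versus ``a.e.\ in $t$'' passage---it simply writes the inequality chain along the trajectory---so your attention to that point is more careful than the original.
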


\begin{proof}
Note that, by Proposition~1 of~\cite{Bacciotti1999}, we know that~(\ref{eq:differentialinclusion}) is Lyapunov stable at $x^\star$. All that remains to be shown is local convergence towards $x^\star$ (which must be an equilibrium) in finite time. Indeed, given any maximal solution $x:[0,t^\star)\to\mathbb{R}^n$ to~(\ref{eq:differentialinclusion}) with $x(0)=x_0\neq x^\star$, we know by Lemma~\ref{lemma:dVdtisinVdot}, that $\mathcal{E}(t) = V(x(t))$ is absolutely continuous with $\dot{\mathcal{E}}(t)\in\dot{V}(x(t))$ a.e. in $t\in [0,t^\star]$. Therefore, we have
\begin{equation}
    \dot{\mathcal{E}}(t) \leq \sup\dot{V}(x(t)) \leq -c\,V(x(t))^\alpha = -c\,\mathcal{E}(t)^\alpha
\end{equation}
a.e. in $t\in [0,t^\star]$. Since $\mathcal{E}(0)=V(x_0) > 0$, given that $x_0\neq x^\star$, the result then follows by invoking Lemma~\ref{lemma:Edotineq} and noting that $\mathcal{E}(t^\star) = 0 \iff V(t^\star,x(t^\star)) = 0 \iff x(t^\star) = x^\star$.
\qed
\end{proof}

Finite-time stability still follows without Assumption~\ref{ass:K}, provided that $x^\star$ is an equilibrium of~(\ref{eq:differentialinclusion}). In practical terms, this means that trajectories starting arbitrarily close to $x^\star$ may not actually exist, but nevertheless there exists a neighborhood $\mathcal{D}$ of $x^\star$ over which, any trajectory $x(\cdot)$ that indeed exists and starts at $x(0)=x_0\in\mathcal{D}$ must converge in finite time to $x^\star$, with settling time upper bounded by $T(x_0)$ (the bound still tight in the case that~(\ref{eq:Vdotineq}) holds with equality). 

\section{Proof of Theorem~\ref{thm1}}\label{section-theorem1-proof}
\begin{proof}
    Let us focus on the $q$-RGF~(\ref{eq:qRGF}) (the case of $q$-SGF~(\ref{eq:qSGF}) follows exactly the same steps) with the candidate Lyapunov function $V \triangleq f - f(x^\star)$. Clearly, $V$ is Lipschitz continuous and regular (given that it is continuously differentiable). Furthermore, $V$ is positive definite w.r.t. $x^\star$. 

    Notice that, due to the dominated gradient assumption, $x^\star$ must be an isolated stationary point of $f$. To see this, notice that, if $x^\star$ were not an isolated stationary point, then there would have to exist some $\tilde{x}^\star$ sufficiently near $x^\star$ such that $\tilde{x}^\star$ is both a stationary point of~$f$, and satisfies $f(\tilde{x}^\star) > f(x^\star)$, since $x^\star$ is a strict local minimizer of~$f$. But then, we would have
    \begin{equation}
        0 = \frac{p-1}{\psi}\|\nabla f(\tilde{x}^\star)\|_2^{\frac{\psi}{p-1}} \geq \mu^{\frac{1}{p-1}}(f(\tilde{x}^\star) - f(x^\star)) > 0,
    \end{equation}
    and subsequently $0>0$, which is absurd. 
    
    Therefore, $F(x) \triangleq -c\nabla f(x)/\|\nabla f(x)\|_2^{\frac{q-2}{q-1}}$ is continuous for every $x\in\mathcal{D}\setminus\{0\}$, for some small enough open neighborhood~$\mathcal{D}$ of $x^\star$. Let us assume that $\mathcal{D}$ is positively invariant w.r.t.~(\ref{eq:qRGF}), which can be achieved, for instance, by replacing $\mathcal{D}$ with its intersection with some small enough strict sublevel set of $f$. Notice that 
    $\|F(x)\|_2 = c\|\nabla f(x)\|_2^{\frac{1}{q-1}}$
    with $q\in (p,\infty]\subset (1,\infty]$, \emph{i.e.}, $\frac{1}{q-1} \in [0,\infty)$. If $q=\infty$, which results in the normalized gradient flow 
    $\dot{x} = -\frac{\nabla f(x)}{\|\nabla f(x)\|_2}$ 
    proposed by~\cite{Cortes2006}, then $\|F(x)\|_2 = c > 0$. We can thus show that $F(x)$ is discontinuous at $x=0$ for $q=\infty$. On the other hand, if $q \in (p,\infty) \subset (1,\infty)$, then we have $\|F(x)\|_2\to 0$ as $x\to x^\star$, and thus $F(x)$ is continuous (but not Lipschitz) at $x=x^\star$. Regardless, we may freely focus exclusively on $\mathcal{D}\setminus\{x^\star\}$ since $\{x^\star\}$ is obviously a zero-measure set.

    Let $\mathcal{F}\triangleq\mathcal{K}[F]$. We thus have, for each $x\in\mathcal{D}\setminus\{x^\star\}$,
    \begin{subequations}
    \begin{align}
        \sup \dot{V}(x) &= \sup\,\{a\in\mathbb{R}: \exists v\in\mathcal{F}(x) \textnormal{ s.t. } a=p\cdot v, \forall p\in\partial V(x)\}\nonumber \\
        &= \sup\,\{\nabla V(x)\cdot v: v\in\mathcal{F}(x)\}\nonumber\\
        &= \nabla V(x)\cdot F(x)
        = -c\|\nabla f(x)\|_2^{2-\frac{q-2}{q-1}}= -c\|\nabla f(x)\|_2^{\frac{1}{\theta'}}\nonumber\\
        &\leq -c [C(f(x)-f(x^\star))^\theta]^{\frac{1}{\theta'}}
        = -c C^{\frac{1}{\theta'}}V(x)^{\frac{\theta}{\theta'}}.\label{ineedyou}
    \end{align}
    \end{subequations}
    Since $\frac{\theta}{\theta'}<1$, given that $s>1\mapsto\frac{s-1}{s}$ is strictly increasing, then the conditions of Theorem~\ref{thm:FTS} are satisfied. In particular, we have finite-time stability at $x^\star$ with a settling time $t^\star$ upper bounded by
    \begin{equation}
        t^\star \leq \frac{(f(x_0) - f(x^\star))^{1-\frac{\theta}{\theta'}}}{c\,C^{\frac{1}{\theta'}}\left(1-\frac{\theta}{\theta'}\right)} \leq \frac{(\|\nabla f(x_0)\|_2/C)^{\frac{1}{\theta}\left(1-\frac{\theta}{\theta'}\right)}}{c\,C^{\frac{1}{\theta'}}\left(1-\frac{\theta}{\theta'}\right)} = \frac{\|\nabla f(x_0)\|_2^{\frac{1}{\theta}-\frac{1}{\theta'}}}{cC^\frac{1}{\theta}\left(1-\frac{\theta}{\theta'}\right)}
    \end{equation}
    for each $x_0\in\mathcal{D}$, which completes the proof.
    \qed
\end{proof}

\newpage
\bibliographystyle{plainnat}
\bibliography{references}

\end{document}